\newtheorem{theorem}{Theorem}[section]
\newtheorem{proposition}[theorem]{Proposition} 
\begin{document}

\title{Spatial Lifting for Dense Prediction}

\author{Mingzhi Xu, Yizhe Zhang
\thanks{M. Xu and Y. Zhang are with the School of Computer Science and Engineering, Nanjing University of Science and Technology, Nanjing, China. Email: zhangyizhe@njust.edu.cn

Preprint, Under Review.}

}

\maketitle

\begin{abstract}
We present Spatial Lifting (SL), a novel methodology for dense prediction tasks. SL operates by lifting standard inputs, such as 2D images, into a higher-dimensional space and subsequently processing them using networks designed for that higher dimension, such as a 3D U-Net. Counterintuitively, this dimensionality lifting allows us to achieve good performance on benchmark tasks compared to conventional approaches, while reducing inference costs and significantly lowering the number of model parameters.  The SL framework produces intrinsically structured outputs along the lifted dimension. This emergent structure facilitates dense supervision during training and enables robust, near-zero-additional-cost prediction quality assessment at test time. We validate our approach across 19 benchmark datasets—13 for semantic segmentation and 6 for depth estimation—demonstrating competitive dense prediction performance while reducing the model parameter count by over 98\% (in the U-Net case) and lowering inference costs. Spatial Lifting introduces a new vision modeling paradigm that offers a promising path toward more efficient, accurate, and reliable deep networks for dense prediction tasks in vision.
\end{abstract}

\begin{IEEEkeywords}
Spatial Lifting, Dense Prediction, Efficient Deep Learning, Semantic Segmentation, Depth Estimation, Uncertainty Estimation
\end{IEEEkeywords}

\section{Introduction}
\IEEEPARstart{D}{ense} prediction tasks, such as semantic segmentation \cite{Ronneberger2015_UNet, Chen2017_DeepLab}, depth estimation \cite{ranftl2020towards, Bhat2021_AdaBins}, and optical flow \cite{Teed2020_RAFT}, form a cornerstone of computer vision with critical applications ranging from autonomous driving and robotics to medical image analysis and augmented reality. The goal in these tasks is to produce a structured output, assigning a label or value to every pixel or voxel in the input. Achieving high accuracy, efficiency and robustness in these predictions is paramount for their reliable deployment in real-world scenarios. 

Significant progress in deep learning has led to highly effective models for dense prediction, often leveraging sophisticated convolutional neural network (CNN) architectures \cite{He2016_ResNet, Long2015_FCN} or, more recently, transformer-based approaches \cite{Dosovitskiy2021_ViT, Xie2021_SegFormer}. While these methods attain state-of-the-art performance, this success often comes at the cost of substantial computational complexity. State-of-the-art models frequently involve vast numbers of parameters and high floating-point operations, posing challenges for deployment on resource-constrained platforms like mobile devices or embedded systems. Furthermore, obtaining reliable estimates of prediction quality or uncertainty from these complex models typically requires additional mechanisms, such as model ensembling or Monte Carlo dropout \cite{Gal2016_DropoutUncertainty}, which can further increase computational overhead during inference.

\begin{figure}[t]
    \centering
    \includegraphics[width=0.48\textwidth]{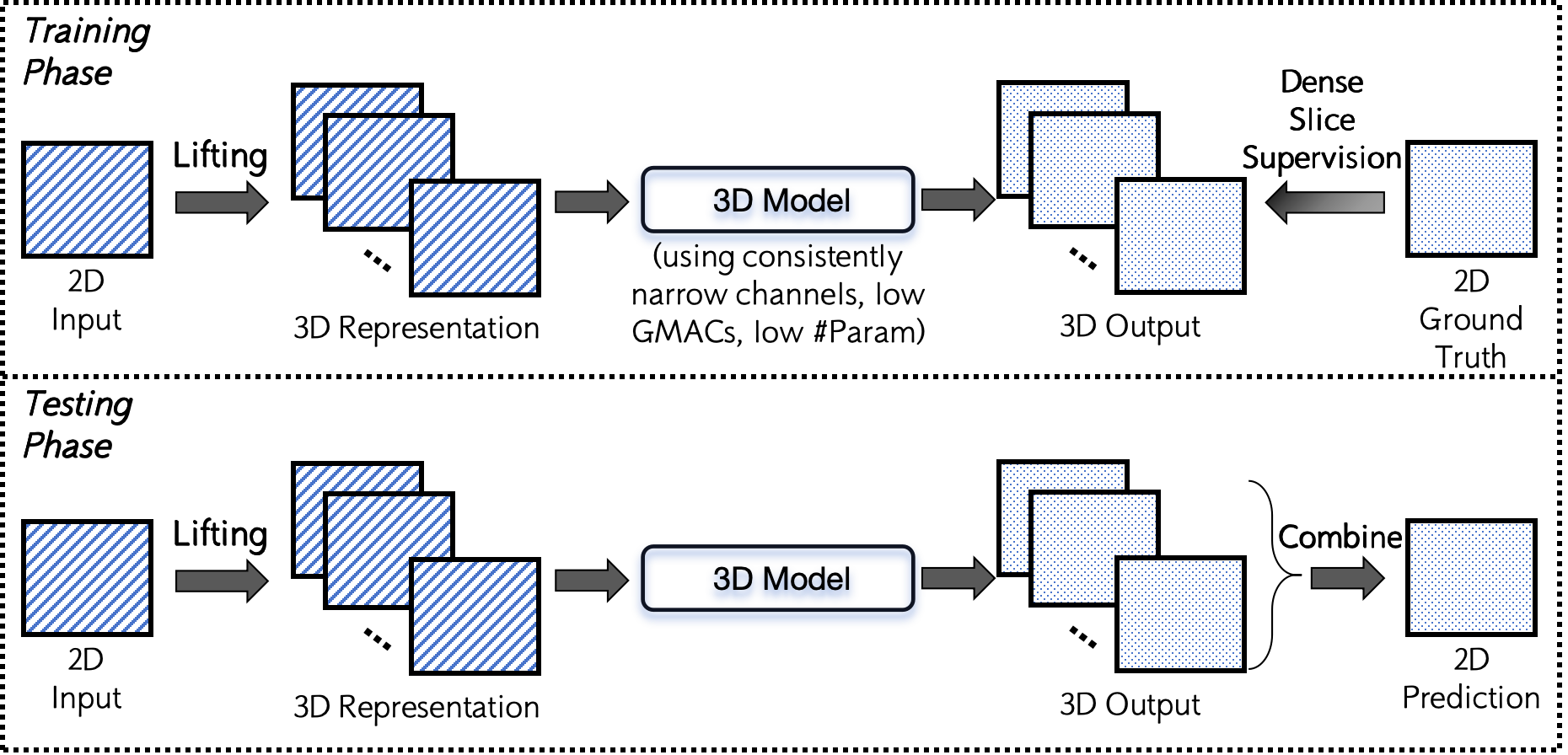}
    \caption{High-level view of Spatial Lifting for Dense Prediction.}
    \label{fig:overview}
\end{figure}

Most efforts to reduce the computational cost of deep learning, such as efficient architecture design (e.g.,\cite{howard2017mobilenets}, \cite{zhang2018shufflenet}, \cite{tan2019efficientnet}), post-training pruning (e.g.,~\cite{han2015learning}), and quantization (e.g.,~\cite{krishnamoorthi2018quantizing}), knowledge distillation (e.g.,~\cite{touvron2021training}) have operated directly on the native input dimensions. However, these methods have not explored whether modeling in a higher spatial dimension could offer additional performance benefits. In this work, we present \textbf{Spatial Lifting (SL)}, an effective and efficient methodology for dense prediction. The core idea of SL is simple yet powerful: standard inputs, such as 2D images, are first ``lifted" into a higher-dimensional space (e.g., 3D). This lifted image (or representation) is then processed using deep networks designed for that higher dimension – for instance, employing a 3D U-Net \cite{Cicek2016_3DUNet} architecture to process the 3D image. We demonstrate that this deliberate dimensionality expansion allows us to achieve better performance on established benchmark tasks compared to conventional architectures operating in the original input dimension. Crucially, this performance gain is achieved while concurrently \textit{significantly reducing} the number of model parameters and \textit{lowering} the GMACs required for inference. Beyond computational efficiency and accuracy, the SL framework possesses another unique characteristic: it intrinsically produces structured outputs along the auxiliary lifted dimension without requiring additional computation. This emergent structure enables highly beneficial, facilitating advanced dense supervision strategies during the training phase. Moreover, at test time, it enables robust estimation of prediction quality or uncertainty with near zero additional computational overhead, offering valuable insights into model reliability on a per-prediction basis. A high-level overview of the SL is illustrated in Fig.~\ref{fig:overview}. Our key contributions are summarized as follows.
\IEEEpubidadjcol
\begin{itemize}
\item {We introduce Spatial Lifting (SL)}, a novel paradigm for dense prediction tasks that lifts 2D inputs into a higher-dimensional (e.g., 3D) space and processes them using standard higher-dimensional networks, enabling richer spatial modeling at reduced computational cost.

\item {SL significantly reduces model complexity}, achieving competitive or superior performance with dramatically fewer parameters and GMACs compared to conventional architectures such as UNet and PVT-based decoders.

\item {A built-in prediction quality assessment (PQA) mechanism is proposed} that leverages internal consistency across output slices to estimate segmentation quality with negligible overhead, showing good correlation with true performance metrics.

\item {Theoretical insights are provided} showing that SL introduces an implicit spatial regularization effect along the lifted dimension, improving generalization and robustness via smoother learned representations.

\item {Comprehensive experiments} on 13 semantic segmentation and 6 depth estimation datasets validate the effectiveness, efficiency, and generalizability of SL across tasks, network backbones, and decoders.
\end{itemize}

\section{Related Work}
Our proposed Spatial Lifting (SL) methodology intersects with several research areas, including efficient deep learning architectures, techniques for enhancing spatial reasoning in dense prediction, methods leveraging dimensionality changes, and uncertainty estimation. We position SL relative to these domains below.

\subsection{Dense Prediction Architectures}
Modern dense prediction heavily relies on deep neural networks, primarily Fully Convolutional Networks (FCNs) \cite{Long2015_FCN} and U-Net architectures \cite{Ronneberger2015_UNet, Cicek2016_3DUNet}. These often employ an encoder-decoder structure to capture multi-scale contextual information while preserving spatial resolution. Subsequent work, such as DeepLab variants \cite{Chen2017_DeepLab}, introduced techniques like atrous (dilated) convolutions and spatial pyramid pooling \cite{Zhao2017_PSPNet, He2015_SPPNet} to enlarge receptive fields without sacrificing resolution. More recently, Vision Transformers (ViTs) \cite{Dosovitskiy2021_ViT} and their derivatives (e.g., SegFormer \cite{Xie2021_SegFormer}, SETR \cite{Zheng2021_SETR}) have shown strong performance by leveraging global self-attention mechanisms. While powerful, these methods typically achieve high accuracy through increased network depth, width, or module complexity, often leading to significant computational demands. SL offers a different path: instead of adding complexity within the native input dimension, it lifts the image to a higher dimension, aiming to capture complex spatial relationships more efficiently within that space using potentially simpler base operations (e.g., standard 3D convolutions).

\subsection{Efficient Deep Learning}
Addressing the significant computational cost of modern deep learning models has been a major focus of research. Efforts can be broadly grouped into two main areas: designing efficient network architectures from the ground up and compressing large, pre-existing models.

A primary strategy involves creating inherently efficient architectures. Landmark examples include {MobileNets} \cite{howard2017mobilenets}, which utilize depthwise separable convolutions to reduce FLOPs, and {ShuffleNets} \cite{zhang2018shufflenet}, which employ pointwise group convolutions and channel shuffling for further efficiency. More systematic approaches like {EfficientNet} \cite{tan2019efficientnet} have introduced compound scaling, a principled method to balance network depth, width, and resolution to create highly efficient and accurate models. These methods primarily focus on optimizing 2D convolutional operations or finding efficient layer combinations within the original data dimensionality.

Another significant line of research focuses on model compression and acceleration. Techniques such as {network pruning} \cite{han2015learning, frankle2018lottery} aim to remove redundant parameters from a trained network, either in an unstructured or structured manner. {Quantization} reduces the numerical precision of weights and activations (e.g., from 32-bit floats to 8-bit integers), significantly shrinking model size and often speeding up inference. {Knowledge distillation} \cite{hinton2015distilling} transfers knowledge from a large "teacher" model to a smaller "student" model, enabling the smaller model to achieve higher accuracy than it would if trained alone. Furthermore, {Neural Architecture Search (NAS)} \cite{liu2018darts, tan2019mnasnet, cai2018proxylessnas} has automated the discovery of novel, efficient architectures tailored for specific hardware and tasks.

SL contrasts with these by fundamentally changing the \textit{dimensionality} of the space where primary computation occurs. Instead of optimizing 2D operations, SL leverages potentially simpler, standard convolutions (e.g., 3D) in a higher-dimensional space, achieving efficiency by processing a representation where spatial relationships might be more explicitly captured, rather than solely through intricate 2D filter designs or channel manipulations.

\subsection{Dimensionality Manipulation}
{Depth-to-Space / Space-to-Depth (Pixel Shuffle):} The "Depth-to-Space" (D2S) operation, also known as Pixel Shuffle \cite{shi2016real}, rearranges data from the channel dimension into spatial dimensions, commonly used for efficient upsampling in super-resolution. Its inverse, Space-to-Depth, performs the opposite rearrangement for downsampling. D2S reshapes a tensor $(H, W, C \cdot r^2)$ to $(H \cdot r, W \cdot r, C)$. This operation focuses on rearranging features learned by preceding layers. While both D2S and SL involve interplay between channel and spatial dimensions, their goals and mechanisms differ fundamentally. D2S \textit{rearranges} existing feature dimensions for up/down-sampling, relying on prior layers to encode information correctly in the channels. SL, conversely, \textit{adds} a new spatial dimension and performs the core feature extraction \textit{within} this higher-dimensional space.

{Lifting in Computational Geometry:} Conceptually, SL draws inspiration from ``lifting'' techniques in computational geometry \cite{de2000computational}. For example, lifting 2D points onto a 3D paraboloid transforms Voronoi diagram computation into finding a 3D lower envelope. Such techniques typically map problems to abstract higher-dimensional spaces where geometric structures become simpler or computationally easier to handle. While sharing the principle of increasing dimensionality, SL differs significantly: (1) Its purpose is to \textit{enrich} spatial feature representation within a learning framework, not to simplify a geometric problem. (2) The lifted space in SL remains a \textit{spatial} domain where learnable operations (convolutions) are applied, rather than an abstract geometric construct.

\subsection{Uncertainty and Quality Estimation}
Providing reliable estimates of prediction uncertainty or quality is vital for deploying models in critical applications. Common approaches include Monte Carlo dropout \cite{Gal2016_DropoutUncertainty}, forming ensembles of models, or developing explicitly Bayesian neural networks, all of which typically introduce significant computational overhead at inference time. Recent work also explores learning dedicated uncertainty prediction branches or using test-time augmentation. SL offers a different perspective by leveraging the \textit{intrinsic structure} generated along the lifted dimension. As explored in our work, variations or consistencies along this auxiliary dimension can serve as a proxy for prediction confidence or quality, enabling robust estimation with negligible computational cost compared to methods requiring multiple forward passes or dedicated uncertainty modules. This built-in capability distinguishes SL from post-hoc or computationally intensive uncertainty techniques.

In summary, Spatial Lifting presents a novel approach distinct from existing methods. It leverages dimensionality increase not for geometric simplification or data rearrangement, but as a core mechanism for efficient and powerful spatial feature learning within deep networks, offering unique benefits for supervision and uncertainty estimation.

\section{Methodology}
\subsection{Method Pipeline}
\noindent\textbf{Input.} The proposed SL technique is applicable to both image inputs and feature inputs, such as those generated by a pretrained transformer-based encoder. For clarity and simplicity, the following explanation will utilize image inputs to illustrate the method.

Given a $k$-dimensional ($k$-D) image, for example, when $k$ equals 2,  $\mathbf{I} \in \mathbb{R}^{w \times h \times 3}$ with spatial dimensions $w \times h$ and 3 channels, we first lift the image to a $(k+1)$-dimensional ($(k+1)$-D) space by replicating it $m$ times along a new axis. Here, $m$ is a constant (e.g., $m=16$). The resulting $(k+1)$-D image $\mathbf{I}' \in \mathbb{R}^{w \times h \times m \times 3}$ is then used as the input for the subsequent network. This lifting operation can be defined as:
\begin{equation}
\mathbf{I}'(x, y, z, c) 
= \mathbf{I}(x, y, c), 
\quad \forall z \in \{1, 2, \dots, m\},
\label{eq:example}
\end{equation}
where $x \in \{1, \dots, w\}$, $y \in \{1, \dots, h\}$, and $c \in \{1, 2, 3\}$. 

\noindent\textbf{Forward Pass.} The lifted $(k+1)$-D image $\mathbf{I}'$ is fed into a $(k+1)$-D deep network (e.g., UNet~\cite{Ronneberger2015_UNet}), denoted as $f_\theta$, where $\theta$ represents the trainable parameters of the network. For example, when $k$ equals 2, the input is a 3D image, and the UNet operates in $(k+1)$-D (3D) space, outputting a segmentation prediction $\mathbf{P} \in \mathbb{R}^{w \times h \times m \times C_{out}}$, where $C_{out}$ is the number of segmentation classes. Formally, the network performs:
\begin{equation}
\mathbf{P} = f_\theta(\mathbf{I}').
\end{equation}
The output of the network, logits maps $\mathbf{P}$, contains $m$ slices along the additional dimension. 

\noindent\textbf{Model Training with Dense Slice Supervision.} For training, each slice of the output $\mathbf{P}(x, y, z, c)$ is supervised using the $k$-D ground truth mask $\mathbf{M} \in \mathbb{R}^{w \times h \times C}$. For example, when $k$ equals 2, the ground truth mask is a 2D segmentation mask replicated $m$ times along the new dimension to match the shape of $\mathbf{P}$:
\begin{equation}
\mathbf{M}'(x, y, z, c) = \mathbf{M}(x, y, c), \quad \forall z \in \{1, 2, \dots, m\}.
\end{equation}
The model is trained using a loss function $\mathcal{L}$, such as the Dice loss, cross-entropy loss, computed slice-wise between $\mathbf{P}$ and $\mathbf{M}'$:
\begin{equation}
\mathcal{L}(\mathbf{P}, \mathbf{M}') = \frac{1}{m} \sum_{z=1}^m \ell(\mathbf{P}(\cdot, \cdot, z, \cdot), \mathbf{M}(\cdot, \cdot, \cdot)).
\end{equation}

Given $n$ training images $\{\mathbf{I}^{(i)}, \mathbf{M}^{(i)}\}_{i=1}^n$, we optimize $\theta$ by minimizing the average loss across all training samples:
\begin{equation}
\mathcal{L}_{\text{train}} = \frac{1}{n} \sum_{i=1}^n \mathcal{L}(f_\theta(\mathbf{I}'^{(i)}), \mathbf{M}'^{(i)}).
\end{equation}

At the end of training, during the final epoch, we compute the average loss $\mathcal{L}_z$ for each slice $z$ for all training samples:
\begin{equation}
\mathcal{L}_z = \frac{1}{n} \sum_{i=1}^n \ell(\mathbf{P}^{(i)}(\cdot, \cdot, z, \cdot), \mathbf{M}^{(i)}(\cdot, \cdot, \cdot)).
\end{equation}
The $s$ slices $z_1, z_2, \dots, z_s$ with the lowest average loss values are selected:
\begin{align}
\{z_1, z_2, \dots, z_s\} 
&= \arg\min_{\{z_1, z_2, \dots, z_s\}} 
    \sum_{k=1}^s \mathcal{L}_{z_k}, \nonumber \\
&\quad z_k \in \{1, 2, \dots, m\},
\end{align}
where $z_1, z_2, \dots, z_s$ are indices corresponding to the selected slices. The default value of $s$ is set to 5.

\noindent\textbf{Model Testing.} During testing, a $k$-D image $\mathbf{I}_{\text{test}} \in \mathbb{R}^{w \times h \times 3}$, here $k$ equals 2, is lifted to $3$-D space, resulting in $\mathbf{I}'_{\text{test}} \in \mathbb{R}^{w \times h \times m \times 3}$. The trained network $f_\theta$ is used to generate predictions:
\begin{equation}
\mathbf{P}_{\text{test}} = f_\theta(\mathbf{I}'_{\text{test}}) \in \mathbb{R}^{w \times h \times m \times C}.
\end{equation}

The segmentation result is obtained by summing the prediction logits of the selected slices \( z_1, z_2, \ldots, z_s \):  
\begin{equation}
\mathbf{S}_{\text{test}}(x, y, c) = \sigma\left(\sum_{z \in \{z_1, z_2, \ldots, z_s\}} \mathbf{P}_{\text{test}}(x, y, z, c)\right),
\end{equation}
where \( \sigma \) denotes the activation function, e.g., the sigmoid function.  The final segmentation is determined by assigning the class with the highest summed probability to each pixel:  
\begin{equation}
\text{Segmentation}(x, y) = \arg\max_c \, \mathbf{S}_{\text{test}}(x, y, c).
\end{equation}

\subsection{Prediction Quality Assessment}

The proposed method can also be utilized to assess the quality of the prediction outputs by analyzing the differences between the selected output slices and unselected output slices. The core idea is to compare the prediction results of selected slices, $\{z_1, z_2, \dots, z_s\}$, with those of the unselected slices, $\{z' \mid z' \in \{1, 2, \dots, m\} \setminus \{z_1, z_2, \dots, z_s\}\}$. For segmentation task, the similarity is measured using a metric such as the Dice coefficient. A higher average similarity between selected and unselected slices implies higher prediction quality, while a lower similarity indicates lower prediction quality. 

Let $\mathbf{PB}(\cdot, \cdot, z, \cdot)$ represent the predicted binary segmentation map at slice $z$. We compute the Dice similarity between the segmentation maps of a selected slice $z_i \in \{z_1, z_2, \dots, z_s\}$ and an unselected slice $z' \notin \{z_1, z_2, \dots, z_s\}$ as: $\text{Dice}(\mathbf{BM}(\cdot, \cdot, z_i, \cdot), \mathbf{BM}(\cdot, \cdot, z', \cdot))$. The prediction quality score is computed as the average of the Dice similarities across all pairs of selected and unselected slices:
\begin{multline}
Q = \frac{1}{s \cdot (m - s)} 
\sum_{z_i \in \{z_1, z_2, \dots, z_s\}} 
\sum_{z' \notin \{z_1, z_2, \dots, z_s\}} \\
\text{Dice}(\mathbf{BM}(\cdot, \cdot, z_i, \cdot), 
\mathbf{BM}(\cdot, \cdot, z', \cdot)).
\end{multline}

Here, $Q$ serves as an indicator of prediction quality:
\underline{High $Q$ Value}: Indicates that the selected slices are highly consistent with the unselected slices, suggesting higher prediction quality.
\underline{Low $Q$ Value}: Indicates greater dissimilarity between selected and unselected slices, suggesting lower prediction quality. This prediction quality assessment framework provides a quantitative measure for assessing the quality of segmentation outputs. Section~\ref{sec:SLU} empirically demonstrates that $Q$ values correlate reasonably well with actual Dice scores.

\subsection{Theoretical Analysis}
This section provides a formal analysis justifying its efficacy, integrating theoretical arguments regarding representation learning, regularization, generalization, and intrinsic quality assessment. We assume the input is $k$-dimensional and lifted to $(k+1)$-dimensions.

\subsubsection{Implicit Regularization}
\begin{theorem}[Spatial Regularization as an Implicit Lipschitz Constraint]
\label{thm:lipschitz}
A $(k+1)$-D convolutional layer with kernel $K$ (size $K_z$ along $z$), operating on the lifted input $\mathbf{I}'$, implicitly enforces a Lipschitz constraint~\cite{khromov2024some} along the lifted dimension $z$. The shared weights of the kernel across $z$ ensure that the change in the layer's output is bounded with respect to changes along the $z$-dimension of its input.
\end{theorem}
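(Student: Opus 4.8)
The plan is to isolate the action of the convolutional layer along the lifted axis $z$ and exhibit it as a bounded (Lipschitz) linear operator between the spaces of $z$-slices, with an explicit constant assembled from the kernel weights. First I would fix the ordinary spatial coordinates and channels and regard the layer as acting on the sequence of slices $\{X_z\}_z$, each $X_z \in \mathbb{R}^{w\times h\times C_{\mathrm{in}}}$. Because the kernel has size $K_z$ along $z$ and its weights are \emph{shared} across $z$, the layer acts along this axis as a discrete convolution $Y_z = \sum_{j=-r}^{r} W_j\, X_{z+j}$ (up to an additive bias), with $r=\lfloor K_z/2\rfloor$, where each tap $W_j$ is a fixed linear map folding in the spatial convolution and channel mixing at $z$-offset $j$; crucially $W_j$ does \emph{not} depend on $z$, and this independence is precisely what weight sharing provides and what the argument below needs.

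Next I would establish the core estimate. For any two interior levels $a,b\in\{1,\dots,m\}$ (so that all required taps are in range),
\begin{equation}
Y_a - Y_b \;=\; \sum_{j=-r}^{r} W_j\,\bigl(X_{a+j}-X_{b+j}\bigr),
\end{equation}
and the additive bias cancels. Taking Frobenius norms on slices and operator norms on the taps, and then maximizing over the shift,
\begin{equation}
\|Y_a - Y_b\|_F \;\le\; \Bigl(\sum_{j=-r}^{r} \|W_j\|_{\mathrm{op}}\Bigr)\,\max_{|s-t|\le |a-b|}\|X_s - X_t\|_F .
\end{equation}
Setting $L_K := \sum_{j} \|W_j\|_{\mathrm{op}}$ and defining the $z$-modulus of continuity $\omega_X(\delta):=\max_{|s-t|\le \delta}\|X_s-X_t\|_F$, this reads $\omega_Y(\delta)\le L_K\,\omega_X(\delta)$: the layer contracts all variation along $z$, or at worst expands it by the finite factor $L_K$. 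Since the lifted input satisfies $\omega_{\mathbf I'}\equiv 0$ by Eq.~\eqref{eq:example}, the first layer's output is likewise constant along $z$ in the interior, and any departure from constancy is quantitatively controlled by $L_K$.

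I would then extend to the full network $f_\theta$ by composition: Lipschitz constants along $z$ multiply across layers, element-wise activations (ReLU, sigmoid) are $1$-Lipschitz and only improve the bound, pooling/upsampling along $z$ with window $s$ contributes a factor at most $s$, and skip connections add their constants, yielding an overall $z$-Lipschitz constant of the form $\prod_i L_{K_i}$ times these structural factors. The main obstacle, and the one point needing genuine care rather than routine bookkeeping, is the boundary of the deliberately short $z$-axis: zero- or reflection-padding breaks the exact shift-equivariance used above, so near $z=1$ and $z=m$ the identity for $Y_a-Y_b$ picks up extra terms involving padded or mirrored values. I would bound these residuals by $L_K$ times the norms of the boundary slices, so the Lipschitz conclusion survives globally with the same constant plus an additive boundary term. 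It is worth emphasizing that this boundary effect — together with pooling/upsampling — is exactly the mechanism by which the output ceases to be perfectly constant along $z$ even though the lifted input is, which is what makes the per-slice losses $\mathcal{L}_z$ and the quality score $Q$ nontrivial; the theorem then says this residual structure is \emph{smooth}, i.e., implicitly regularized, with modulus bounded by the kernel norms. A secondary, purely expository subtlety is that ``implicitly enforces a Lipschitz constraint'' must be pinned to a norm; I would adopt slice-wise Frobenius and tap-wise operator norms as above, noting the statement is norm-agnostic up to dimension-dependent constants.
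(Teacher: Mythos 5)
Your proposal is correct and follows essentially the same route as the paper's own (very informal) proof argument: both hinge on the fact that weight sharing along $z$ makes the layer act as a fixed linear operator on $z$-slices, so output changes along $z$ are bounded by a constant assembled from the kernel weights (the paper's ``sum of absolute values of the kernel weights along $z$'' is your $L_K=\sum_j\lVert W_j\rVert_{\mathrm{op}}$). Your version is strictly more careful than the paper's sketch — writing the actual inequality, composing across layers, and noting that since the lifted input has zero $z$-variation the non-constancy of the output arises only from padding/boundary and resampling effects — none of which the paper addresses, but nothing in your argument diverges from its approach.
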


\begin{proof}[Proof Argument]
Consider two input tensors to the convolutional layer, $\mathbf{X}_1, \mathbf{X}_2 \in \mathbb{R}^{w' \times h' \times m \times C_{prev}}$, which might differ along the $z$-dimension (due to processing in prior layers). Let $\mathbf{Y}_1, \mathbf{Y}_2$ be the corresponding outputs. The $(k+1)$-D convolution involves applying the same kernel weights $K$ at each position $(x,y,z)$. The difference in output $\lVert \mathbf{Y}_1 - \mathbf{Y}_2 \rVert$ can be bounded by a function of the difference in input $\lVert \mathbf{X}_1 - \mathbf{X}_2 \rVert$ and the norm of the kernel $K$. Specifically, for the $L_1$ or $L_\infty$ norm, the Lipschitz constant along the $z$-dimension is related to the sum of absolute values of the kernel weights along $z$. The weight sharing across $z$ is the mechanism enforcing this constraint. 
\end{proof}

\begin{theorem}[Improved Generalization Bound]
\label{thm:generalization}
Under standard assumptions on data distribution and loss function regularity, the implicit Lipschitz constraint~\cite{khromov2024some} induced by spatial lifting (Theorem \ref{thm:lipschitz}) introduces a smoothness bias along the lifted dimension. This bias potentially leads to improved generalization bounds (e.g., via lower Rademacher complexity~\cite{bartlett2005local}) compared to a $k$-D network of similar parameter capacity lacking this specific inductive bias.
\end{theorem}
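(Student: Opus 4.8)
The plan is to recast the informal ``smoothness bias'' as a quantitative shrinkage of the hypothesis class and then read off the generalization gain from a standard complexity bound. Let $\mathcal{F}_{\mathrm{SL}}$ denote the family of end-to-end maps realized by the $(k{+}1)$-D network with kernels shared along the lifted axis $z$, post-composed with the slice-aggregation $\sigma(\sum_{z\in\{z_1,\dots,z_s\}}\cdot)$, and let $\mathcal{F}_{k\mathrm{D}}$ be a $k$-D network of matching parameter budget. Invoking symmetrization together with Talagrand's contraction lemma (valid under the assumed Lipschitz regularity of $\ell$), for every $f$ in the class and with probability at least $1-\delta$,
\begin{equation}
\mathcal{R}(f)\ \le\ \widehat{\mathcal{R}}_n(f)\ +\ 2\rho\,\mathfrak{R}_n(\mathcal{F})\ +\ 3\sqrt{\tfrac{\log(2/\delta)}{2n}},
\end{equation}
where $\rho$ is the loss Lipschitz constant and $\mathfrak{R}_n$ the empirical Rademacher complexity~\cite{bartlett2005local}. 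Hence it suffices to show $\mathfrak{R}_n(\mathcal{F}_{\mathrm{SL}})\le \mathfrak{R}_n(\mathcal{F}_{k\mathrm{D}})$, i.e.\ that the weight-sharing-induced constraint strictly deflates Rademacher complexity.

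First I would make the inductive bias of Theorem~\ref{thm:lipschitz} quantitative: composing the per-layer $z$-Lipschitz bounds, the end-to-end slice map $z\mapsto \mathbf{P}(\cdot,\cdot,z,\cdot)$ is $L$-Lipschitz with $L=\prod_\ell L_\ell$ and each $L_\ell$ controlled by the $z$-directional kernel norm. Because the lifted input is \emph{constant} along $z$ by Eq.~\eqref{eq:example}, the $m$ realized output slices are the image of a short $z$-interval under this $L$-Lipschitz map; they therefore lie in a tube of radius $O(L\,K_z\,\mathrm{depth})$ about a common ``anchor'' field. I would encode this as an explicit norm ball on the admissible slice-deviations, so that $\mathcal{F}_{\mathrm{SL}}$ factors as (anchor sub-network) $\times$ (small residual variation), the latter's radius being an increasing function of the shared-kernel norms only.

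Next I would bound $\mathfrak{R}_n(\mathcal{F}_{\mathrm{SL}})$ through a Dudley entropy integral, $\mathfrak{R}_n(\mathcal{F})\lesssim \inf_{\alpha>0}\big(\alpha+\tfrac{1}{\sqrt n}\int_\alpha^{\infty}\sqrt{\log N(\epsilon,\mathcal{F},\|\cdot\|)}\,d\epsilon\big)$, covering the anchor network and the residual ball separately. The decisive point is that the shared kernels are \emph{reused} at all $m$ positions, so the log-covering number is governed by a single copy of the parameters and does not accrue an $m$-fold (or depth/width-fold) factor; a genuinely $k$-D network that instead spends the same parameter budget on additional width or depth has its covering number grow with the corresponding spectral/Frobenius norms without the compensating tube constraint. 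Propagating these covering numbers through the entropy integral and back into the symmetrization bound yields the claimed improvement, with the size of the gap dictated by the $z$-directional kernel-norm bound of Theorem~\ref{thm:lipschitz}.

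The main obstacle is the clause ``of similar parameter capacity'': one must pin down a fair correspondence so that the \emph{only} structural difference between $\mathcal{F}_{\mathrm{SL}}$ and $\mathcal{F}_{k\mathrm{D}}$ is the $z$-sharing constraint, and then argue the constraint is \emph{active} rather than vacuous for the networks actually trained (i.e.\ that $L\,K_z$ is small in practice). I would address the former by the canonical ``unrolling'' bijection that replaces the shared $z$-kernel by $m$ independent copies — matching GMACs while inflating parameters — or, dually, by compressing the baseline; the latter I expect to require either data-dependent (margin-based) refinements of the Rademacher bound or direct empirical measurement of the realized $z$-Lipschitz constants, since a worst-case argument alone may yield a vacuous bound absent extra distributional assumptions. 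This is precisely why the statement is hedged with ``potentially.''
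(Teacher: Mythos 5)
Your proposal follows essentially the same route as the paper: the paper's own ``proof'' is only a short informal argument asserting that the $z$-direction Lipschitz constraint restricts the hypothesis class and hence can lower Rademacher complexity relative to an equal-capacity $k$-D network, which is exactly the skeleton you build on. Your additional machinery (symmetrization with Talagrand contraction, the tube/anchor decomposition, the Dudley entropy integral, and the unrolling bijection to make ``similar parameter capacity'' precise) goes well beyond what the paper actually supplies, and your closing caveat --- that one must show the constraint is active rather than vacuous, which is why the claim is hedged with ``potentially'' --- correctly identifies the gap the paper leaves open.
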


\begin{proof}[Proof Argument]
Generalization bounds often depend on measures of model complexity, such as Rademacher complexity or norms of weights. Lipschitz constraints on the function class implemented by the network can lead to tighter bounds, as they restrict how rapidly the function's output can change with input variations. The spatial lifting imposes such a constraint specifically along the $z$-dimension, which effectively reduces the hypothesis space to functions that are stable across the replicated views. Compared to a $k$-D network with the same number of parameters (capacity) but potentially less structured internal representations, the lifted network's inductive bias towards $z$-smoothness can result in better generalization.
\end{proof}

\subsubsection{Effect of Dense Slice Supervision}

The training objective minimizes the average loss over all $m$ slices. This setup provides a regularization effect.

\begin{proposition}[Implicit Regularization]
\label{prop:regularization}
The dense supervision strategy, where $\mathcal{L} = \frac{1}{m} \sum_{z=1}^m \ell(\mathbf{P}_z, \mathbf{M})$, acts as an implicit regularizer by averaging gradients across multiple internal representations (slices), thereby promoting solutions that are less sensitive to noise or specific input variations represented implicitly across the lifted dimension.
\end{proposition}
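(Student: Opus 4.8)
The plan is to formalize the informal claim by viewing the per-slice losses as noisy estimators of a common objective and showing that their average simultaneously reduces the variance of the update and restricts the effective hypothesis class. First I would write the dense-supervision gradient as $\nabla_\theta \mathcal{L} = \tfrac{1}{m}\sum_{z=1}^m \nabla_\theta \ell(\mathbf{P}_z,\mathbf{M})$ and decompose each summand as $g_z = \bar g + \xi_z$, where $\bar g = \tfrac1m\sum_z g_z$ is the consensus direction and $\xi_z$ is the slice-specific fluctuation encoding the sensitivity of that internal representation to noise or to particular input variations. The averaged fluctuation is $\tfrac1m\sum_z\xi_z$, whose second moment equals $\tfrac{1}{m^2}\sum_{z,z'}\mathrm{Cov}(\xi_z,\xi_{z'})$; this is bounded above by $\sigma^2 := \max_z \mathbb{E}\lVert\xi_z\rVert^2$ and, whenever the slices are not perfectly correlated, strictly smaller, interpolating down to $\sigma^2/m$ in the decorrelated limit. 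This is the precise sense in which averaging gradients across slices damps the noise component of the update.

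Second, I would connect this variance reduction to implicit regularization via the standard SGD-noise picture: the selected minimum of a noisy gradient flow is shaped by the noise covariance, favoring flatter regions and solutions on which the per-slice gradients agree. Because dense supervision shrinks the component of the update that differentiates slices, the dynamics are biased toward parameters $\theta$ for which every $\ell(\mathbf{P}_z,\mathbf{M})$ is simultaneously small and the $\mathbf{P}_z$ are mutually consistent, i.e.\ representations that do not hinge on slice-specific artifacts. Equivalently, I would give a static argument: at any point where $\mathcal{L}$ is small all $m$ slice losses are small, so the feasible set is the intersection of $m$ good-fit constraints rather than a single one, which is a genuine restriction of the hypothesis space and dovetails with the lower Rademacher-complexity claim of Theorem~\ref{thm:generalization}. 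A complementary reading is that after the first few layers the $m$ replicated copies diverge slightly, so dense supervision behaves like training against $m$ mildly augmented views of each sample, a well-known regularizer.

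The main obstacle is that the slices are \emph{not} independent: they are produced by a single shared-weight $(k+1)$-D network, so the cross-slice covariances $\mathrm{Cov}(\xi_z,\xi_{z'})$ are neither zero nor equal to $\sigma^2$, and the strength of the argument depends on where in that range they lie. I would handle this by invoking Theorem~\ref{thm:lipschitz}: the implicit Lipschitz constraint along $z$ gives a two-sided grip — it prevents the slices and their gradients from drifting arbitrarily far apart, so the averaged target stays meaningful, while the nontrivial processing in earlier layers keeps them from collapsing to identical tensors, so the averaging is not vacuous. Turning this into an explicit bound on $\mathrm{Cov}(\xi_z,\xi_{z'})$ in terms of the along-$z$ kernel norm (controlled by $K_z$) by chaining the Lipschitz bound through the per-layer Jacobians is the delicate step; consistent with the level of rigor of Theorems~\ref{thm:lipschitz}--\ref{thm:generalization}, I would present it as a proof argument rather than a fully quantitative derivation, and note that the constant degrades gracefully as the slices become more correlated, recovering in the worst case the trivial ``no harm'' bound $\sigma^2$.
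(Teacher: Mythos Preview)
Your proposal is correct and shares the paper's core mechanism: write the dense-supervision gradient as an average of per-slice gradients, argue that averaging damps slice-specific noise (variance reduction), and conclude that the optimizer is biased toward parameters that perform consistently across all slices, analogous to ensembling. The paper's own argument stops at exactly that level of informality.

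Where you differ is in depth and honesty rather than direction. You add an explicit mean-plus-fluctuation decomposition with a covariance bound, a connection to the SGD-noise/flat-minima picture, a static hypothesis-space-intersection argument tying back to Theorem~\ref{thm:generalization}, and a data-augmentation reading; none of these appear in the paper, which simply asserts that ``averaging gradients is a known technique for variance reduction'' and invokes the ensemble analogy. Most notably, you confront the dependence between slices head-on and propose to control $\mathrm{Cov}(\xi_z,\xi_{z'})$ via the along-$z$ Lipschitz constant of Theorem~\ref{thm:lipschitz}; the paper's proof argument does not acknowledge this correlation issue at all. So your route is the same but strictly more rigorous, and your identification of the cross-slice correlation as the delicate step is apt --- it is precisely the gap the paper's informal argument leaves open.
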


\begin{proof}[Proof Argument]
The gradient used for updating weights $\theta$ is $\nabla_\theta \mathcal{L} = \frac{1}{m} \sum_{z=1}^m \nabla_\theta \ell(\mathbf{P}_z, \mathbf{M})$. This gradient is an average of the gradients computed independently for each slice's prediction $\mathbf{P}_z$ with respect to the common target $\mathbf{M}$.

Averaging gradients is a known technique for variance reduction in stochastic optimization. In this context, any idiosyncrasies or noise in the computation path leading to a specific slice $\mathbf{P}_z$ (which might arise from the specific filter interactions along the $z$-dimension) that results in a large gradient component $\nabla_\theta \ell(\mathbf{P}_z, \mathbf{M})$ for that slice will have its impact on the final update direction $\nabla_\theta \mathcal{L}$ dampened by averaging with the gradients from the other $m-1$ slices.

This encourages the network to learn weights $\theta$ that perform well {on average} across all slices. Such solutions are less likely to rely on spurious correlations or noise that might be effective only for a subset of the internal processing paths (represented by slices $z$). This behavior is analogous to ensemble methods, where averaging predictions or gradients from multiple models (or multiple views, in this case) leads to more robust and generalized performance. By enforcing consistency towards the target $\mathbf{M}$ across all $m$ slices through a single shared set of weights $\theta$, the training process implicitly regularizes the model, reducing its effective capacity to overfit compared to a standard k-D network trained only on the original k-D input/output pair.
\end{proof}

\subsubsection{Mechanism of Prediction Quality Assessment}

The architecture inherently produces multiple predictions ($\mathbf{P}_z$ for $z=1, \dots, m$) for a single input. Their consistency can be used to estimate output quality. Let $S_{best} = \{z_1, \dots, z_s\}$ be the indices of the $s$ slices with lowest average loss during training, and $S_{rest} = \{1, \dots, m\} \setminus S_{best}$. The PQA score $Q$ measures the average similarity (e.g., Dice) between slices in $S_{best}$ and $S_{rest}$.

\begin{proposition}[Uncertainty Correlation]
\label{prop:sqa}
The prediction quality score $Q$, defined as the average similarity between predictions from slices in $S_{best}$ and $S_{rest}$, serves as an indicator of model confidence or prediction quality. Lower values of $Q$ correlate with higher model uncertainty or difficulty in segmenting the given input.
\end{proposition}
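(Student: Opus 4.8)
The plan is to relate $Q$ to an intrinsic measure of \emph{inter-slice disagreement}, relate that disagreement in turn to the stability of the aggregated test prediction $\mathbf{S}_{\text{test}}$, and finally connect stability to prediction quality through a bias/variance-type decomposition of the excess risk. First I would decompose each test-time slice logit map as $\mathbf{P}_{\text{test}}(\cdot,\cdot,z,\cdot) = \mathbf{G}^{\star} + \mathbf{R}_z$, where $\mathbf{G}^{\star} := \frac{1}{m}\sum_{z=1}^{m}\mathbf{P}_{\text{test}}(\cdot,\cdot,z,\cdot)$ is the across-slice consensus and $\mathbf{R}_z$ the slice-specific residual with $\sum_{z}\mathbf{R}_z = 0$. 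Theorem~\ref{thm:lipschitz} gives a uniform bound on the $\mathbf{R}_z$ (the kernels are shared across $z$), but their realized magnitude is input-dependent: on inputs far from class boundaries the shared filters drive every slice to the same near-saturated logits, so $\lVert\mathbf{R}_z\rVert$ is small; on ambiguous inputs the per-slice filter interactions along $z$ yield genuinely different tentative answers.

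Next I would lower-bound $1-Q$ by the residual spread $V := \frac{1}{m}\sum_{z=1}^{m}\lVert\mathbf{R}_z\rVert^{2}$. For a selected slice $z_i$ and an unselected slice $z'$, the Dice deficit equals $|A\triangle B|/(|A|+|B|)$ with $A,B$ the thresholded foreground sets of the two slices, i.e.\ up to normalization by the (to leading order slice-independent) foreground size it is proportional to the count of pixels where the two thresholded logits disagree; such a pixel must, at that location, have consensus margin $|\mathbf{G}^{\star}-\tau|$ below the larger of the two residual magnitudes $|\mathbf{R}_{z_i}|,|\mathbf{R}_{z'}|$ (read pointwise), where $\tau$ is the decision threshold. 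Assuming the per-pixel margin has bounded density near $0$ (a mild non-degeneracy condition), so that $\Pr_{\text{pixel}}[|\mathbf{G}^\star-\tau|<t]\lesssim t$, averaging over all $s(m-s)$ pairs shows that $1-Q$ is sandwiched between an increasing function of $V$ and a decreasing function of the mean margin $\bar\mu := \mathbb{E}_{\text{pixel}}|\mathbf{G}^{\star}-\tau|$. Equivalently, \emph{large $Q$ forces small residual spread, large consensus margin, or both.}

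Finally I would transfer this to the quality of the deployed output. The aggregate is thresholded from $\sum_{z\in S_{best}}\mathbf{P}_{\text{test}}(\cdot,\cdot,z,\cdot) = s\,\mathbf{G}^{\star} + \sum_{z\in S_{best}}\mathbf{R}_z$, whose sign pattern coincides with that of $\mathbf{G}^{\star}$ at every pixel with margin exceeding $\tfrac1s\lVert\sum_{z\in S_{best}}\mathbf{R}_z\rVert_{\infty}$, a quantity controlled by $\sqrt{V}$. Combining with the previous step: when $Q$ is high, on all but a vanishing pixel fraction the aggregated decision equals the consensus decision and is insensitive to \emph{which} slices are aggregated; since $S_{best}$ are the lowest training-loss slices and, by Theorem~\ref{thm:generalization}, each slice's test loss is close to its training loss, the consensus decision is an approximately unbiased estimate of $\mathbf{M}_{\text{test}}$, so a high-$Q$ prediction incurs error near the irreducible (Bayes) level. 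Conversely, low $Q$ certifies a large ambiguous region on which the aggregated sign pattern depends sensitively on the residuals, which is exactly a large-variance region of the estimator, and by the standard decomposition of excess risk into $\text{bias}^2+\text{variance}+\text{noise}$ this upper-bounds the true Dice away from $1$. Hence $\mathbb{E}[\mathrm{Dice}(\mathbf{S}_{\text{test}},\mathbf{M}_{\text{test}})]$ is, under these assumptions, a non-decreasing function of $Q$, which is the asserted correlation.

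The delicate step is the unbiasedness claim in the last paragraph: $S_{best}$ is chosen using \emph{training} losses, so one must control the gap between ``best on train'' and ``best on test'' (here Theorem~\ref{thm:generalization} supplies the needed uniform loss-gap bound), and one must argue that summing zero-mean residuals does not systematically distort the thresholded output, for which I would add an exchangeability assumption on the $\mathbf{R}_z$ across $z$. The remaining inequalities in the first two steps are elementary once the non-degeneracy condition on the margin distribution is granted; the only quantitative subtlety there is that the Dice coefficient, being a ratio, is handled via its linearization for non-degenerate (non-empty) foreground masks rather than directly.
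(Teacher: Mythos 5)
Your first two steps (the consensus-plus-residual decomposition $\mathbf{P}_z=\mathbf{G}^\star+\mathbf{R}_z$ and the margin argument relating $1-Q$ to the residual spread $V$) are a legitimate and in fact sharper formalization of what the paper argues only qualitatively: the paper's own proof argument goes no further than ``consistent slices $\Rightarrow$ high $Q$; divergent slices $\Rightarrow$ low $Q$; low $Q$ = high variance across the internal predictions, and predictive variance is a standard proxy for uncertainty.'' The proposition itself is stated only as a correlation with \emph{uncertainty}, and the paper deliberately hedges the link to actual quality (``potentially lower prediction quality''), deferring the quality claim to the empirical correlations in Table~IV.

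The genuine gap is in your third step. You need the consensus $\mathbf{G}^\star$ to be an approximately unbiased estimate of $\mathbf{M}_{\text{test}}$ in order to conclude that high $Q$ forces error near the Bayes level, and nothing available can deliver that. All $m$ slices are produced by the \emph{same} weights $\theta$ acting on the \emph{same} lifted input, so they can agree on a systematically wrong answer; inter-slice agreement controls only the variance-like component of the error and says nothing about the bias of $\mathbf{G}^\star$. Your proposed patches do not close this: exchangeability of the $\mathbf{R}_z$ across $z$ constrains the residuals, not the location of $\mathbf{G}^\star$ relative to the ground truth; and Theorem~\ref{thm:generalization} (itself only a proof sketch about Rademacher complexity) would at best give an on-average bound over the test \emph{distribution}, whereas $Q$ is a per-input statistic whose whole purpose is to flag out-of-distribution or ambiguous inputs --- exactly the regime where any distributional generalization bound is silent. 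Relatedly, invoking the bias${}^2$+variance+noise decomposition to bound the Dice of a single deterministic prediction conflates variance across slices with variance of an estimator over data; the former does not upper-bound excess risk. The conclusion ``$\mathbb{E}[\mathrm{Dice}]$ is non-decreasing in $Q$'' is therefore not provable from these premises (and is empirically only a moderate correlation in the paper); the defensible statement is the one-directional one the paper actually makes, namely that low $Q$ certifies high internal predictive variance and hence high uncertainty, while high $Q$ certifies only confidence, not correctness.
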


\begin{proof}[Proof Argument]
The training objective drives all slice predictions $\mathbf{P}_z$ towards the ground truth $\mathbf{M}$. For a test input $\mathbf{I}_{\text{test}}$ that is similar to the training data distribution and presents unambiguous features, a well-trained network $f_\theta$ is expected to produce consistent predictions across all slices: $\mathbf{P}_z(\mathbf{I}'_{\text{test}}) \approx \mathbf{P}_{z'}(\mathbf{I}'_{\text{test}})$ for all $z, z' \in \{1, \dots, m\}$. In this case, the similarity between slices in $S_{best}$ and $S_{rest}$ will be high, leading to a high value of $Q$.

Conversely, consider a test input that is challenging (e.g., out-of-distribution, noisy, ambiguous object boundaries). The network $f_\theta$ might struggle to produce a stable, consistent output. Due to the $(k+1)$-D processing, slight variations in how features are integrated along the $z$-dimension could lead to divergent predictions $\mathbf{P}_z$ across different slices. The slices identified as $S_{best}$ represent those that, on average over the training set, performed well. However, for a specific challenging test case, these slices might not agree well with the other slices in $S_{rest}$. This divergence indicates that the network is not producing a single, confident segmentation but rather a set of somewhat differing hypotheses across the lifted dimension.

The disagreement between $\mathbf{P}_{z_i}$ ($z_i \in S_{best}$) and $\mathbf{P}_{z'}$ ($z' \in S_{rest}$) will result in low pairwise similarity scores, leading to a low overall PQA score $Q$. Low similarity (low $Q$) reflects high variance among the internal predictions $\{\mathbf{P}_z\}_{z=1}^m$. In Bayesian deep learning and uncertainty estimation, predictive variance is commonly used as a proxy for model uncertainty. Therefore, a low $Q$, indicating high variance across slice predictions, correlates with higher model uncertainty and potentially lower prediction quality or reliability for that specific input. This assessment is intrinsic as it uses the internal structure of the model's output space without requiring multiple stochastic forward passes or external calibration models.
\end{proof}

\begin{figure*}[t]
  \centering
  \subfloat[Grad-CAM activations]{\includegraphics[height=7cm]{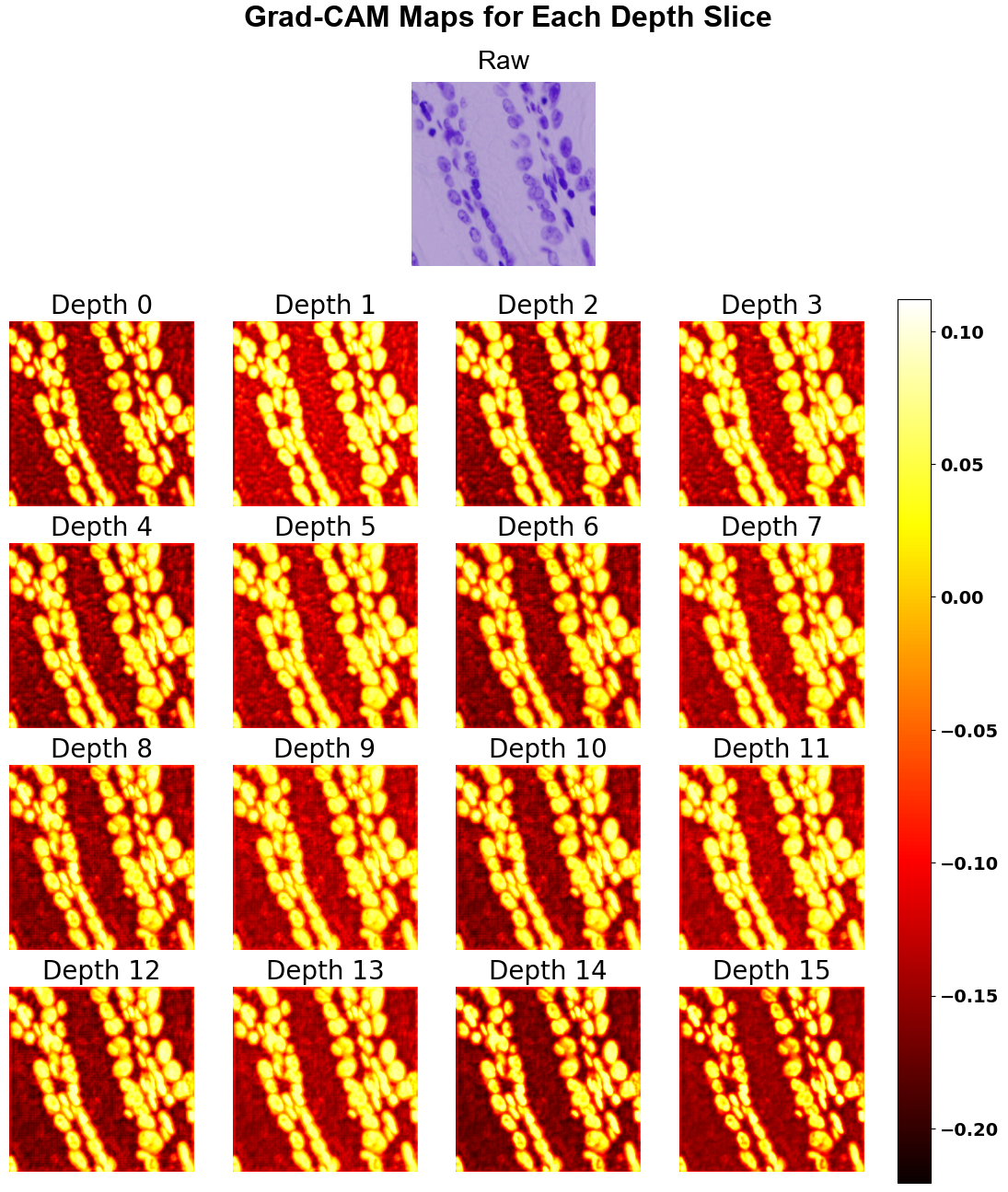}\label{fig:gradcam}}
  \hfill
  \subfloat[Saliency maps]{\includegraphics[height=7cm]{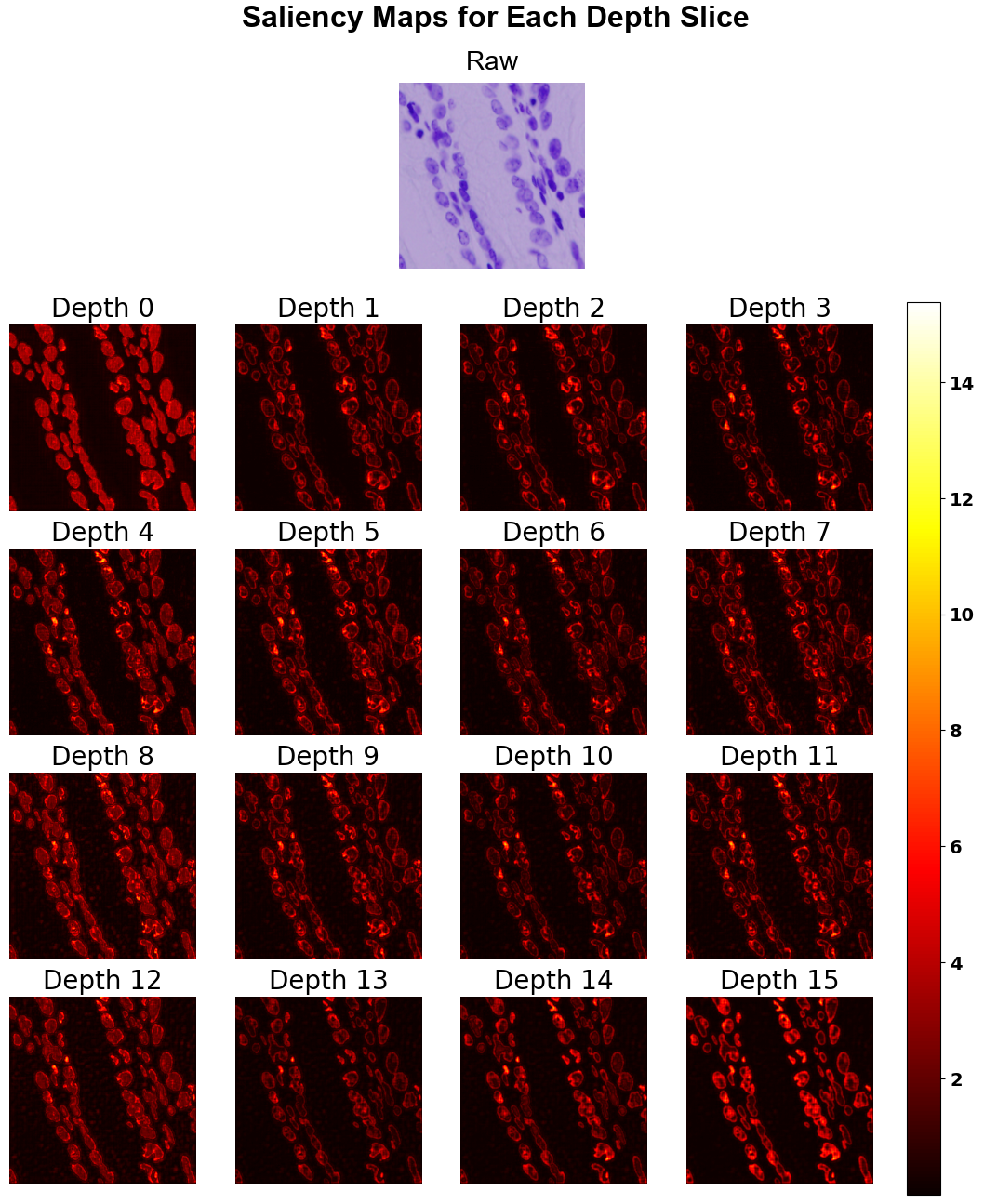}\label{fig:saliency}}
  \hfill
  \subfloat[Guided Backprop]{\includegraphics[height=7cm]{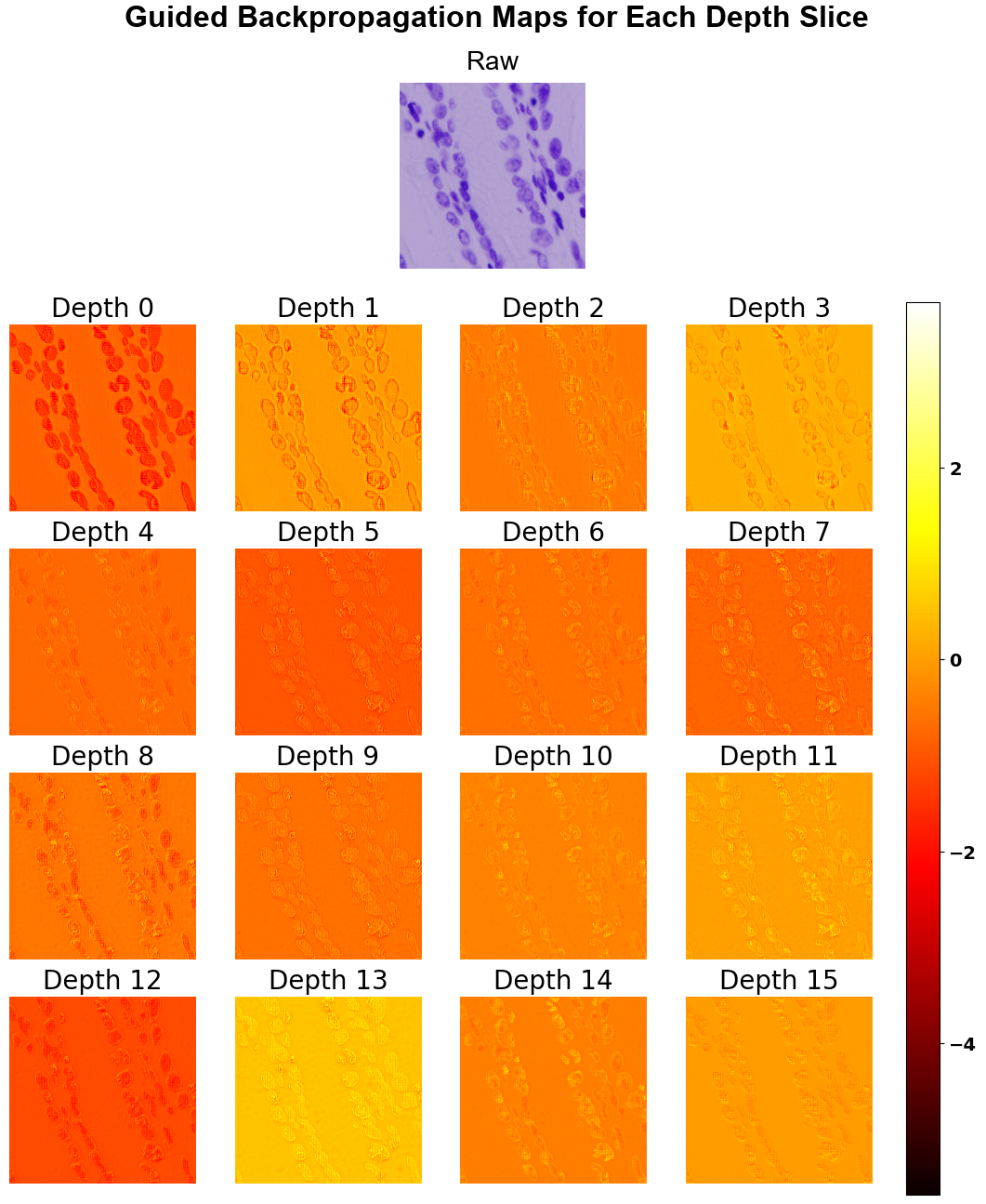}\label{fig:guidedbp}}
  \caption{Visualization results using different interpretability methods for each depth slice.}
  \label{fig:visualizations}
\end{figure*}

\subsection{Parameter and Computational Complexity}

We analyze the computational costs by comparing a conventional 2D dense prediction architecture against its Spatial Lifting (SL) counterpart. The core trade-off introduced by SL is substituting the deep, computationally expensive channels of 2D networks with an additional spatial dimension processed by lightweight, channel-shallow 3D operations.

\noindent \textbf{Conventional 2D Architecture.}
In a typical U-Net or FCN-style architecture, the number of channels $C_l$ at a layer $l$ often doubles with network depth (i.e., $C_l \propto 2^l$), leading to very wide feature maps in deeper layers (e.g., 512 or 1024 channels). For a 2D convolutional layer with a $K_x \times K_y$ kernel, the costs are:
\begin{itemize}
    \item {Parameters:} $\approx K_x \cdot K_y \cdot C_{l-1} \cdot C_l$
    \item {MACs:} $\approx H \cdot W \cdot K_x \cdot K_y \cdot C_{l-1} \cdot C_l$
\end{itemize}
This results in a parameter count that grows quadratically with the channel dimensions, which become the dominant source of model complexity.

\noindent \textbf{Spatial Lifting (SL) Architecture.}
The SL design lifts the input $m$ times and processes it with a 3D network that maintains a small, constant number of channels, $C_*$, throughout its depth (e.g., $C_*=8$). For a 3D convolutional layer with a $K_x \times K_y \times K_z$ kernel, the costs are:
\begin{itemize}
    \item {Parameters:} $\approx K_x \cdot K_y \cdot K_z \cdot C_*^2$
    \item {MACs:} $\approx H \cdot W \cdot m \cdot K_x \cdot K_y \cdot K_z \cdot C_*^2$
\end{itemize}

\noindent \textbf{Comparative Analysis.}
A direct comparison reveals the source of SL's efficiency.
\begin{enumerate}
    \item {Parameter Efficiency:} The SL architecture's parameter count is governed by $C_*^2$, where $C_*$ is a small constant. In contrast, the 2D model's parameters are driven by $C_{l-1} \cdot C_l$, which is orders of magnitude larger in deeper layers. This is why SL models can have over 95\% fewer parameters, as empirically demonstrated in Tables~\ref{Tab:Metrics} and \ref{Tab:arch_details}.

    \item {MACs Trade-off:} The MACs for SL introduce a multiplicative factor for the lifted dimension ($m$) and kernel depth ($K_z$). However, this increase is counteracted by the massive reduction from replacing the large $C_{l-1} \cdot C_l$ term with the very small $C_*^2$ term. When $C_*$ is sufficiently small (e.g., 8 or 16), the total MACs can be lower than in a conventional deep-channel 2D network.
\end{enumerate}
In essence, SL exchanges channel-wise complexity for manageable spatial complexity. The implicit regularization from the 3D convolutions across the lifted dimension allows the network to learn powerful representations without requiring hundreds of channels, leading to models that are not only smaller but also computationally efficient.

\section{Experiments}
Our experiments address two primary objectives. First, we investigate how the SL technique impacts performance and resource consumption (GMACs and model parameters) when applied to widely used architectures such as UNet~\cite{Ronneberger2015_UNet}. Second, we assess whether applying SL on vision decoders for processing features generated by pre-trained encoders (e.g., PVTv2~\cite{wang2022pvt}) can enhance both performance and efficiency. To thoroughly evaluate the proposed approach, our experiments focus on dense prediction tasks, encompassing not only semantic segmentation but also depth estimation, thereby demonstrating the versatility and effectiveness of SL across different applications. 

\begin{table*}[t]
\scriptsize
\centering
\caption{Details of datasets used for evaluation: name, task type, modality, training/testing splits, and image resolution.}
\resizebox{\textwidth}{!}{
\begin{tabular}{|c|c|c|c|c|c|}
\hline
\multirow{2}{*}{\textbf{Name}}& \multirow{2}{*}{\textbf{Task Type}}     &\multirow{2}{*}{\textbf{Modality}}                              &\multicolumn{2}{c|}{\textbf{Data Split}}&\multirow{2}{*}{\textbf{Resolution(px)}}\\ \cline{4-5}
                              &                                         &                                                                &\textbf{Training}     &\textbf{Testing}         &\\ \hline 
CHASE\_DB1                    &Retinal Vessel Segmentation              &Fundus Photography                                              &21                    &7                        &999$\times$960\\ \cline{1-6}             
FIVES                         &Retinal Vessel Segmentation              &Fundus Photography                                              &600                   &200                      &2048$\times$2048\\ \cline{1-6}
ISIC2018                      &Skin Lesion Segmentation                 &Dermoscopic Imaging                                             &2594                  &1000                     &Mixed\\ \cline{1-6}
Kvasir\text{-}SEG             &Colonic Polyp Segmentation               &Endoscopic Imaging                                              &880                   &120                      &Mixed\\ \cline{1-6}
GlaS                          &Gland Segmentation                       &H\&E Histopathological Imaging  &85                    &80                       &775$\times$522\\ \cline{1-6}
BCCD                          &Cell Segmentation                        &Microscopy Imaging                                              &1169                  &159                      &1600$\times$1200\\ \cline{1-6}
DSB2018                       &Nucleus Segmentation                     &Microscopy Imaging                                              &600                   &70                       &Mixed\\ \cline{1-6}
Fluorescent Neuronal Cells    &Neuronal Cell Segmentation               &Microscopy Imaging                                              &200                   &83                       &1600$\times$1200\\ \cline{1-6}
MoNuSAC                       &Nucleus Segmentation and Classification  &H\&E  Histopathological Imaging  &209                   &85                       &Mixed\\ \cline{1-6}
MoNuSeg                       &Nucleus Segmentation                     &H\&E  Histopathological Imaging  &37                    &14                       &1000$\times$1000\\ \cline{1-6}
NuInsSeg                      &Nucleus Segmentation                     &H\&E  Histopathological Imaging  &510                   &155                      &512$\times$512\\ \cline{1-6}
Sartorius                     &Cell Segmentation                        &H\&E  Histopathological Imaging  &540                   &66                       &704$\times$520\\ \cline{1-6}
TNBC                          &Nucleus Segmentation                     &H\&E  Histopathological Imaging  &36                    &14                       &512$\times$512\\ \cline{1-6}
Cityscapes                    &Depth Estimation                         &Urban Street Scenes        &2975                  &500                     &256$\times$128\\ \cline{1-6}
Make3D                        &Depth Estimation                         &Outdoor Landscape          &375                   &50                      &1704$\times$2272\\ \cline{1-6}
DIODE                         &Depth Estimation                         &Diverse Environments       &386                   &60                      &Mixed\\ \cline{1-6}
KITTI                         &Depth Estimation                         &Autonomous Driving Scenes  &800                   &200                     &1216$\times$352\\ \cline{1-6}
MODEST                        &Depth Estimation                         &Driving Scenes             &395k                  &5k                      &224$\times$224\\ \cline{1-6}
NYU Depth                     &Depth Estimation                         &Indoor Scenes              &50688                 &654                     &640$\times$480\\ \cline{1-6}
\end{tabular}
\label{Tab:Datasets}
}
\end{table*}

\begin{table}[t]
    \centering
    \footnotesize
    \caption{MACs and Parameters of UNet without and with SL.}
    \begin{tabular}{@{}c c c c c c@{}}
        \toprule
        \textbf{Metric} & \textbf{Model} & \textbf{5L, 1Res} & \textbf{8L, 1Res} & \textbf{5L, 2Res} & \textbf{8L, 2Res} \\
        \midrule
        \multirow{2}{*}{MACs (G)} & UNet & 13.7245 & 17.4800 & 18.5698 & 20.7387 \\
         & SL-UNet & \textbf{6.0206} & \textbf{6.0216} & \textbf{7.0719} & \textbf{7.0713} \\
        \midrule
        \multirow{2}{*}{Params (M)}& UNet  & 3.3522 & 33.0124 & 6.4959 & 43.2355 \\
        & SL-UNet & \textbf{0.0295} & \textbf{0.0371} & \textbf{0.0382} & \textbf{0.0464} \\
        \bottomrule
    \end{tabular}
    \label{Tab:Metrics}
\end{table}

\subsection{Datasets}
\noindent\textbf{For Semantic Segmentation,} we conduct experiments on thirteen publicly available datasets. These datasets encompass diverse anatomical structures, including the retina, skin, gastrointestinal tract, glands and various cell types, as well as multiple imaging modalities such as fundus photography, dermoscopy, histopathology, endoscopy, and microscopy. Specifically, the datasets are as follows: CHASE\_DB1~\cite{fraz2012ensemble}, DSB2018~\cite{Caicedo2019NucleusSA}, Kvasir\text{-}SEG~\cite{jha2020kvasir}, MoNuSAC~\cite{verma2021monusac2020}, ISIC2018~\cite{codella2018skin,tschandl2018ham10000},MoNuSeg~\cite{kumar2017dataset}, Sartorius~\cite{sartorius-cell-instance-segmentation}, TNBC~\cite{8438559}, Fluorescent Neuronal Cells~\cite{hitrec2019neural,morelli2021automating}, GlaS~\cite{sirinukunwattana2017gland,sirinukunwattana2015stochastic}, BCCD~\cite{Sarker2023}, FIVES~\cite{jin2022fives}, NuInsSeg~\cite{mahbod2023nuinsseg}. The CHASE\_DB1 and FIVES datasets are both used for retinal vessel segmentation. FIVES contains 800 multi-disease fundus photographs annotated at the pixel level, with 600 used for training and 200 for testing. For CHASE\_DB1, we select 21 images for training and 7 images for testing from a set of 28 images. ISIC2018 is a large-scale dataset for skin lesion segmentation, including 2,594 paired training images and 1,000 paired test images. Kvasir\text{-}SEG is an endoscopic dataset designed for pixel-level segmentation of colonic polyps. The dataset consists of 1,000 gastrointestinal polyp images and their corresponding segmentation masks, all annotated and verified by experienced gastroenterologists. We follow the official data split to divide the dataset into 880 images for training and 120 images for validation. GlaS is a gland segmentation dataset, it consists of 85 training images and 80 testing images, each focusing on segmenting glands in colorectal regions. The remaining datasets, including BCCD, DSB2018, Fluorescent Neuronal Cells, MoNuSAC, MoNuSeg, NuInsSeg, Sartorius, and TNBC, are specifically designed for cell or nucleus segmentation tasks. 

\noindent\textbf{For Depth Estimation,} we employ six publicly available datasets: Cityscapes~\cite{cordts2016cityscapes,liu2019end} , Make3D~\cite{saxena2005learning,saxena2007learning}, DIODE~\cite{vasiljevic2019diode}, KITTI~\cite{geiger2013vision}, NYU Depth~\cite{silberman2012indoor} and MODEST~\cite{shan2021modestkaggle}. These datasets cover a wide range of scenes, from outdoor urban environments to indoor scenarios, providing diverse depth cues and challenging variations in lighting, texture, and geometry. Such diversity ensures comprehensive evaluation of model performance under various real-world conditions. The outdoor urban scene datasets include Cityscapes, KITTI, DIODE and Make3D. Cityscapes contains high-resolution images captured in urban street scenes, providing stereo image pairs with dense depth annotations. we use 2,975 images for training and 500 images for testing from the official Cityscapes split. KITTI, collected from a moving vehicle in urban and highway settings, offers stereo RGB images with LiDAR-based ground truth depth, we utilize a subset of 1,000 images originally designated for validation, which we further split into 800 images for training and 200 images for testing to evaluate our model. Make3D provides single outdoor images and corresponding laser-scanned depth maps, containing 425 images divided into 375 training and 50 testing samples. The DIODE dataset comprises RGB-D images captured by a handheld sensor in both indoor and outdoor environments. In this work, only the outdoor subset is utilized, with 80\% of the data used for training and 20\% for testing. The indoor scene datasets include NYU Depth V2 and MODEST. NYU Depth V2 is captured with a Microsoft Kinect sensor and consists of RGB-D images from various indoor scenes, with 795 training images and 654 test images according to the official partition. MODEST is a large-scale synthetic dataset containing approximately 400,000 images, generated by compositing foreground objects onto diverse background scenes in museum environments. It is designed specifically for monocular depth estimation and segmentation, offering complex indoor layouts with high-quality depth annotations. {More details on the datasets used for the semantic segmentation and depth estimation experiments can be found in Table~\ref{Tab:Datasets}.}

\begin{table*}[t]
\footnotesize
\centering
\caption{Performance of UNet and SL-UNet on Segmentation Datasets (Dice Score \%)}
\begin{tabular}{|c|c|c|c|c|c|c|c|}
\hline
\textbf{Config} & \textbf{Model} & \textbf{DSB2018} & \textbf{Kvasir-SEG} & \textbf{ISIC2018} & \textbf{Sartorius} & \textbf{MoNuSAC} & \textbf{C\_DB1} \\ \hline
\multirow{2}{*}{5L, 1Res}
& UNet    & 87.27 & 52.62 & 80.22 & 61.77 & 62.73 & 59.72 \\
& SL-UNet & 87.82 & 52.35 & 78.81 & 69.85 & 66.21 & 66.14 \\ \hline
\multirow{2}{*}{5L, 2Res}
& UNet    & 87.76 & 71.73 & 75.33 & 70.92 & 61.99 & 60.71 \\
& SL-UNet & 90.00 & 69.50 & 82.46 & 70.93 & 66.61 & 66.10 \\ \hline
\multirow{2}{*}{8L, 1Res}
& UNet    & 87.96 & 64.68 & 79.58 & 61.90 & 62.92 & 58.87 \\
& SL-UNet & 87.97 & 57.49 & 85.28 & 70.80 & 66.76 & 66.57 \\ \hline
\multirow{2}{*}{8L, 2Res}
& UNet    & 87.95 & 82.69 & 78.91 & 69.10 & 62.08 & 59.68 \\
& SL-UNet & 88.61 & 76.04 & 84.96 & 71.57 & 66.40 & 67.62 \\ \hline
\end{tabular}

\vspace{3mm} 

\begin{tabular}{|c|c|c|c|c|c|c|c|c|}
\hline
\textbf{Config} & \textbf{Model} & \textbf{MoNuSeg} & \textbf{TNBC} & \textbf{FNC} & \textbf{GlaS} & \textbf{BCCD} & \textbf{FIVES} & \textbf{NuInsSeg} \\ \hline
\multirow{2}{*}{5L, 1Res}
& UNet    & 70.79 & 57.84 & 55.89 & 75.27 & 93.71 & 71.82 & 59.50 \\
& SL-UNet & 73.20 & 62.93 & 65.60 & 76.27 & 95.25 & 73.05 & 60.76 \\ \hline
\multirow{2}{*}{5L, 2Res}
& UNet    & 70.37 & 63.00 & 55.85 & 76.23 & 93.99 & 71.97 & 62.21 \\
& SL-UNet & 73.17 & 63.08 & 61.47 & 80.49 & 94.36 & 74.57 & 67.54 \\ \hline
\multirow{2}{*}{8L, 1Res}
& UNet    & 70.83 & 59.77 & 57.32 & 77.86 & 93.80 & 70.99 & 62.51 \\
& SL-UNet & 72.86 & 62.52 & 64.19 & 78.53 & 94.13 & 71.28 & 63.23 \\ \hline
\multirow{2}{*}{8L, 2Res}
& UNet    & 70.62 & 67.11 & 57.50 & 78.91 & 93.75 & 72.64 & 62.67 \\
& SL-UNet & 73.53 & 65.24 & 63.78 & 79.65 & 94.65 & 73.26 & 66.80 \\ \hline
\end{tabular}
\label{Tab:UNet_SLUNet}
\end{table*}
\begin{table}[t]
\footnotesize
\centering
\caption{Pearson correlation (\(r\)) and Spearman rank correlation (\(\rho\)) between the predicted Dice scores from SL-UNet (8L, 2Res) and the actual Dice scores.}
\begin{footnotesize}
\begin{tabular}{|c|c|c|c|}
\hline
\textbf{Dataset} & \textbf{Pearson $r$} & \textbf{Spearman $\rho$} & \textbf{$p$-value} \\ \cline{1-4}           
CHASE\_DB1 & 0.8313 & 0.8571 & \textbf{< 0.05} \\ \cline{1-4}
DSB2018 & 0.6189 & 0.5271 & \textbf{< 0.05} \\ \cline{1-4}
Kvasir\text{-}SEG & 0.7107 & 0.7495 & \textbf{< 0.05} \\ \cline{1-4}
MoNuSAC & 0.6637 & 0.6327 & \textbf{< 0.05} \\ \cline{1-4}
ISIC2018 & 0.7750 & 0.6230 & \textbf{< 0.05} \\ \cline{1-4}
MoNuSeg & 0.4598 & 0.1604 & > 0.05 \\ \cline{1-4}
Sartorius & 0.6462 & 0.3681 & \textbf{< 0.05} \\ \cline{1-4}
TNBC & 0.3855 & 0.6264 & \textbf{< 0.05} \\ \cline{1-4}
FNC & 0.4268 & 0.4128 & \textbf{< 0.05} \\ \cline{1-4}
GlaS & 0.7410 & 0.8045 & \textbf{< 0.05} \\ \cline{1-4}
BCCD & 0.6259 & 0.5895 & \textbf{< 0.05} \\ \cline{1-4}
FIVES & 0.9199 & 0.6543 & \textbf{< 0.05} \\ \cline{1-4}
NuInsSeg & 0.6335 & 0.4326 & \textbf{< 0.05} \\ \cline{1-4}
\end{tabular}
\end{footnotesize}
\label{Tab:correlation}
\end{table}

\subsection{Spatial Lifting on UNet}\label{sec:SLU}
Table~\ref{Tab:Metrics} compares the model parameter count and inference cost (in GMACs) of the original UNet and SL-UNet. The notation ``5L'' denotes a five-layer UNet, while ``1Res'' indicates the inclusion of one residual connection. Both implementations use the Monai package. In the conventional five-layer U-Net architecture, the number of feature channels is typically configured as 32, 64, 128, 256, and 512 from the first to the fifth layer, whereas SL-UNet adopts a fixed channel setting of 8 and lifts the 2D input to 3D with a depth of 16. SL-UNet features an extremely small number of model parameters and lower GMACs, with extensive parameter sharing across the 3D operations, resulting in a highly lightweight design.

Table~\ref{Tab:UNet_SLUNet} presents the segmentation performance of UNet and SL-UNet across 13 benchmark datasets. SL-UNet consistently outperforms the original UNet on most datasets, achieving better segmentation accuracy while maintaining substantially lower parameter counts and computational costs (cross-referencing Table~\ref{Tab:Metrics}). The observed performance gains underscore the effectiveness and efficiency of the SL design. Notably, given the minimal parameter count of SL-UNet, this suggests a new phenomenon in dense prediction modeling, emphasizing the potential for achieving strong performance with exceptionally lightweight model architectures.

Additionally, Table~\ref{Tab:correlation} demonstrates the prediction quality assessment of the SL technique. SL-UNet produces predicted Dice scores that exhibit moderate to high correlations with the actual Dice scores across most datasets, further validating its utility for reliable AI deployment. To better understand how different depth slices contribute to the final prediction, visualization results using various interpretability methods are shown in Fig.~\ref{fig:visualizations}. Specifically, Grad-CAM (a), saliency maps (b), and guided backpropagation (c) highlight the informative regions captured by each slice within the SL decoder.

\begin{table}[t]
\scriptsize
\centering
\caption{Training configurations: semantic segmentation (cosine-annealed learning rate, weighted BCE+IoU losses) and depth estimation (multi-step decay, masked MSE+L1 losses). Common: AdamW optimizer.}
\label{Tab:train_details}
\begin{tabular}{@{}lcc@{}}
\toprule
\textbf{Configuration} & \textbf{Segmentation Models} & \textbf{Depth Estimation Models} \\
\midrule
Epoch                 & 40                          & 100                             \\
Optimizer             & AdamW                       & AdamW                          \\
LearningRate          & 0.001                       & 0.001                          \\
Scheduler             & Cosine Annealing Warm Restarts & Stepwise Learning Rate decay \\
Clip                  & 0.5                         & 0.5                            \\
LossFunction          & WBCE + WIoU                 & MaskedMSE + MaskedL1           \\
\bottomrule
\end{tabular}
\end{table}

\begin{table}[t]
\scriptsize
\centering
\caption{Decoder architectures: channel dimensions, depth, and complexity metrics.}
\begin{tabular}{@{}lccccccc@{}}
\toprule
\textbf{Decoder} & \textbf{Channel} & \textbf{Depth} & \textbf{MACs (G)} & \textbf{Params (M)} \\
\midrule
HSNet          & [64,128,320,512] & -- & 1.5653 & 4.4072\\
SL-UNet(5L, 1Res) & [8,8,8,8,8,8,8,8] & 16 & \textbf{1.2653} & \textbf{0.5414} \\
SL-UNet(8L, 1Res) & [8,8,8,8,8,8,8,8] & 16 & 1.2835 & 0.5648\\
SL-UNet(5L, 2Res) & [8,8,8,8,8,8,8,8] & 16 & 1.3296 & 0.5762\\
SL-UNet(8L, 2Res) & [8,8,8,8,8,8,8,8] & 16 & 1.3499 & 0.6020 \\ \hline
EMCAD & [64,128,320,512] & -- & 1.1318 & 1.9135\\
SL-EMCAD & [16,32,64,128] & [8,4,2,1] & \textbf{1.0467} & \textbf{0.2623}\\ \hline
CASCADE & [64,128,320,512] & -- & 6.8730 & 10.4219\\
SL-CASCADE & [16,32,64,128] & [8,4,2,1] & \textbf{4.7814} & \textbf{1.6409}\\ \hline
FastDepth & [1280,16,24,32] & -- & 0.2311 & 0.1278\\
SL-FastDepth & [16,8,8,8] & 16 & \textbf{0.2248} & \textbf{0.0404}\\ \hline
MiDaS & [256,256,256,256] & -- & 15.4028 & 8.5934\\
SL-MiDaS & [16,16,16,16] & 16 & \textbf{5.3586} & \textbf{0.1055}\\
\bottomrule
\end{tabular}
\label{Tab:arch_details}
\end{table}

\begin{figure*}[t]
  \noindent
  \begin{tabular}{@{} >{\centering\arraybackslash}m{1cm} m{\dimexpr\textwidth-1cm}@{}}
    (a) & \parbox[t]{\dimexpr\textwidth-1cm}{
          \centering
          \includegraphics[width=0.9\textwidth]{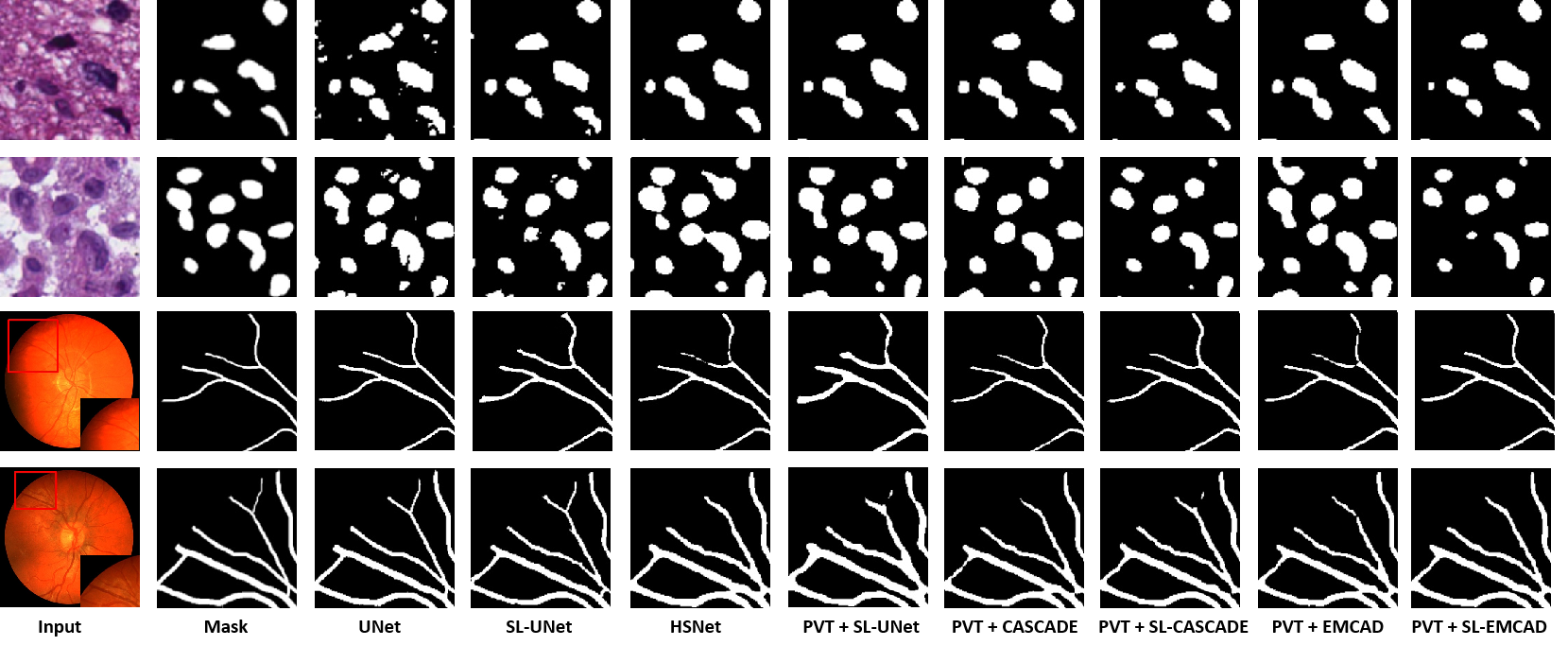}
        } \\
    \noalign{\vskip 0.5cm}
    (b) & \parbox[t]{\dimexpr\textwidth-1cm}{
          \centering
          \includegraphics[width=0.7\textwidth]{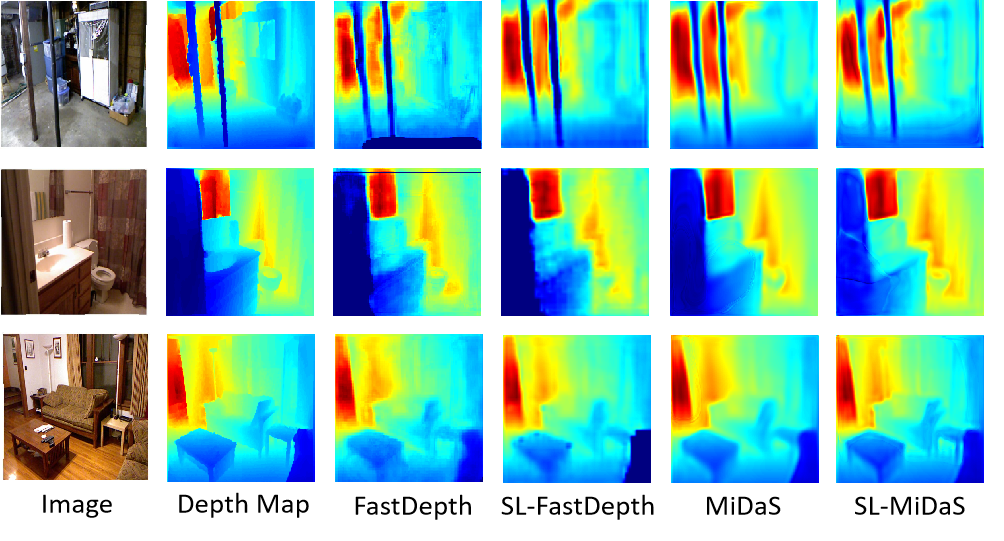}
        } \\
  \end{tabular}
  \caption{(a) Semantic segmentation results; (b) Depth estimation results.}
  \label{fig:rendering}
\end{figure*}

\subsection{Spatial Lifting on Vision Decoders}
To leverage the common practice of employing a pre-trained encoder with a decoder for dense prediction tasks, we utilize a pre-trained PVTv2 as the encoder and incorporate the SL technique to enhance the decoder’s effectiveness in dense prediction. For performance comparison, we include HSNet’s decoder~\cite{ZHANG2022106173}, the recently proposed state-of-the-art efficient decoder EMCAD~\cite{rahman2024emcad}, and its SL-enhanced variant, SL-EMCAD. All models were trained for 40 epochs using the AdamW optimizer with a learning rate of 0.001 and a Cosine Annealing Warm Restarts (CAWR) learning rate scheduler. A gradient clipping value of 0.5 was applied, and the loss functions included BCE with IoU for most configurations, while EMCAD and SL-EMCAD utilized weighted BCE (wbce) combined with weighted IoU (wiou). Multi-scale training was applied across all models using scale factors of 0.75, 1.0, and 1.25.(see Table~\ref{Tab:train_details}).

In Table~\ref{Tab:arch_details}, we present the GMACs and parameter counts of various vision decoders for dense prediction. Compared to state-of-the-art efficient decoders, SL-UNet exhibits slightly higher GMACs but significantly lower parameter counts. Notably, when applying SL to EMCAD, both GMACs and parameter counts are further reduced. With these efficiency metrics in mind, Table~\ref{tab:hsnet_vs_pvt_emcad} shows that compared to the baseline decoder architecture, the introduction of SL leads to consistent performance improvements across most datasets. With the increase in the number of decoder layers and the addition of residual connections, the model performance is further strengthened, reaching its peak with the “8L + 2Res” configuration. This validates the complementary roles of the proposed method, skip connections, and residual structures in capturing both global semantic context and fine-grained spatial details. To further assess the generalizability of the SL module, we incorporated it into two representative state-of-the-art decoder architectures, CASCADE~\cite{rahman2023medical} and EMCAD. While adaptation of the original architectures was required, the integration of SL consistently yielded stable performance improvements across the majority of evaluated datasets. This demonstrates that SL is a versatile enhancement module that can be adapted to and effectively integrated within various decoder designs.
Visual results for semantic segmentation are presented in Fig.~\ref{fig:rendering} (a).
\begin{table*}[t]
\scriptsize
\centering
\caption{Spatial Lifting on vision decoders. Performance evaluated by Dice Score (\%).}

\begin{tabular}[t]{|c|c|c|c|c|c|c|c|}
\hline
\textbf{Model} & \textbf{DSB2018} & \textbf{Kvasir-SEG} & \textbf{ISIC2018} & \textbf{Sartorius} & \textbf{MoNuSAC} & \textbf{CHASE\_DB1} \\
\hline
HSNet & 86.09 & 68.39 & 79.27 & 65.28 & 59.04 & 60.92 \\
PVTv2 + UNet(8L,2Res) & 89.60 & \textbf{84.11} & 86.28 & 70.51 & 63.82 & 64.06 \\
PVTv2 + SL-UNet(5L,1Res) & 88.65 & 76.95 & 80.28 & 67.06 & 65.19 & 63.46 \\
PVTv2 + SL-UNet(5L,2Res) & 88.55 & 81.75 & 81.41 & \textbf{71.80} & 66.48 & 64.37 \\
PVTv2 + SL-UNet(8L,1Res) & 87.71 & 82.41 & 80.35 & 68.39 & 66.43 & 64.51 \\
PVTv2 + SL-UNet(8L,2Res) & \textbf{90.43} & 81.20 & \textbf{87.07} & 68.57 & \textbf{66.72} & \textbf{67.07} \\
\hline
PVTv2 + CASCADE & \textbf{90.48} & 65.96 & 77.32 & 68.29 & \textbf{63.25} & \textbf{71.58} \\
PVTv2 + SL-CASCADE & 89.92 & \textbf{66.43} & \textbf{78.72} & \textbf{72.98} & 62.79 & 70.17\\
\hline
PVTv2 + EMCAD & \textbf{90.56} & \textbf{70.96} & 78.41 & 63.26 & 59.89 & \textbf{70.52} \\
PVTv2 + SL-EMCAD & 87.38 & 64.48 & \textbf{79.93} & \textbf{66.62} & \textbf{60.37} & 69.47 \\
\hline
\end{tabular}

\vspace{3mm} 

\begin{tabular}[t]{|c|c|c|c|c|c|c|c|}
\hline
\textbf{Model} & \textbf{MoNuSeg} & \textbf{TNBC} & \textbf{FNC} & \textbf{GlaS} & \textbf{BCCD} & \textbf{FIVES} & \textbf{NuInsSeg} \\
\hline
HSNet & 69.84 & 65.75 & 55.42 & 79.79 & 92.35 & 71.03 & 65.94 \\
PVTv2 + UNet(8L,2Res)    & 73.81 & 62.51 & 56.22 & 84.70 & 94.64 & 76.36 & 66.62\\
PVTv2 + SL-UNet(5L,1Res) & 74.61 & 68.58 & 56.06 & 82.33 & 94.67 & 74.04 & 65.51 \\
PVTv2 + SL-UNet(5L,2Res) & 74.17 & 66.66 & 56.50 & 84.66 & 94.97 & 74.51 & \textbf{68.05} \\
PVTv2 + SL-UNet(8L,1Res) & 74.52 & 70.66 & 57.36 & 82.59 & 95.01 & 73.47 & 63.78 \\
PVTv2 + SL-UNet(8L,2Res) & \textbf{76.31} & \textbf{70.72} & \textbf{57.37} & \textbf{87.05} & \textbf{95.63} & \textbf{76.69} & 65.34 \\
\hline
PVTv2 + CASCADE & 75.16 & 59.61 & 51.04 & 83.17 & \textbf{95.82} & \textbf{80.24} & 69.08 \\
PVTv2 + SL-CASCADE & \textbf{76.86} & \textbf{64.83} & \textbf{51.54} & \textbf{85.08} & 94.89 & 77.56 & \textbf{69.99}\\
\hline
PVTv2 + EMCAD    & 75.86 & 65.09 & \textbf{61.75} & 79.84 & \textbf{95.36} & \textbf{80.30} & \textbf{70.34} \\
PVTv2 + SL-EMCAD & \textbf{76.31} & \textbf{67.97} & 60.19 & \textbf{84.13} & 95.14 & 77.72 & 67.51\\
\hline
\end{tabular}
\label{tab:hsnet_vs_pvt_emcad}
\end{table*}

\subsection{Spatial Lifting for Depth Estimation}
For depth estimation, we employed two encoder architectures, MobileNet~\cite{sandler2018mobilenetv2} and ResNeXt~\cite{xie2017aggregated}, in conjunction with two types of decoders, FastDepth's~\cite{icra_2019_fastdepth} and MiDaS's~\cite{ranftl2020towards}. The SL module was further integrated into these configurations to investigate its compatibility and potential utility across diverse network architectures. All models underwent training for 100 epochs using the AdamW optimization method with an initial learning rate of 0.001. To promote stable convergence during training, a stepwise learning rate decay schedule was utilized, and gradient clipping was applied with a maximum norm of 0.5. Distinct from semantic segmentation, the depth estimation task utilized loss functions tailored to the characteristics of sparse ground truth, specifically Masked L1 Loss and Masked Mean Squared Error (MSE) Loss, to effectively exclude invalid or missing depth regions during optimization.(see Table~\ref{Tab:train_details})

Table~\ref{Tab:depth estimation} details a comparative analysis of model performance with and without the integration of the SL module. The models augmented with SL exhibit a substantial decrease in both multiply-accumulate operations (MACs) and parameter count compared to the baseline architectures(see Table~\ref{Tab:arch_details}), while simultaneously achieving significant improvements in accuracy across the majority of the evaluated datasets. For instance, when adopting the ResNeXt encoder in conjunction with the MiDaS's decoder,  the MACs decreased by approximately 65.3\%, and the number of parameters was reduced to nearly 1.2\% of the original count. Moreover, under the aforementioned encoder–decoder configuration, except for the KITTI and Make3D, the Root Mean Square Error (RMSE) exhibited consistent reductions across all evaluated datasets, while the $\delta_1$ accuracy metric demonstrated improvements uniformly across all datasets. These results underscore the efficacy of the SL module not only within semantic segmentation tasks but also its promising applicability to dense prediction problems such as depth estimation. Visual results for depth estimation are presented in Fig.~\ref{fig:rendering} (b).

\begin{table*}[t]
\footnotesize
\centering
\caption{Comparison of decoders in depth estimation performance.}
\resizebox{\textwidth}{!}{%
\begin{tabular}{ccccccccccccc}
\toprule
\multirow{2}{*}{\textbf{Model}} & \multicolumn{2}{c}{\textbf{Cityscapes}}& \multicolumn{2}{c}{\textbf{Make3D}}& \multicolumn{2}{c}{\textbf{DIODE}} & \multicolumn{2}{c}{\textbf{KITTI}} & \multicolumn{2}{c}{\textbf{NYU}} & \multicolumn{2}{c}{\textbf{MODEST}} \\
\cmidrule(lr){2-3} \cmidrule(lr){4-5} \cmidrule(lr){6-7} \cmidrule(lr){8-9} \cmidrule(lr){10-11} \cmidrule(lr){12-13}
& {RMSE $\downarrow$} & \textbf{$\delta_1$ $\uparrow$} & {RMSE $\downarrow$} & \textbf{$\delta_1$ $\uparrow$} & {RMSE $\downarrow$} & \textbf{$\delta_1$ $\uparrow$} & {RMSE $\downarrow$} & \textbf{$\delta_1$ $\uparrow$} & {RMSE $\downarrow$} & \textbf{$\delta_1$ $\uparrow$} & {RMSE $\downarrow$} & \textbf{$\delta_1$ $\uparrow$}\\
\midrule      
MobileNet + FastDepth  
& 0.876& 0.576& 1.485 & 0.311& 0.572 & 0.217 & 2.144& 0.220& \textbf{1.449} & 0.450 & \textbf{0.484} & \textbf{0.673} \\
MobileNet + SL-FastDepth 
& \textbf{0.870} & \textbf{0.588} & \textbf{1.434} & \textbf{0.330} & \textbf{0.567} & \textbf{0.236} & \textbf{2.137} & \textbf{0.233} & 1.559 &\textbf{0.466} & 0.507 & 0.651 \\ \hline
ResNeXt + MiDaS   
& 0.918 & 0.556 & \textbf{1.457} & 0.336 & 0.555 & 0.271 & \textbf{2.114} & 0.239 & 1.501 & 0.444 & 0.450 & 0.752\\      
ResNeXt + SL-MiDaS 
& \textbf{0.811} & \textbf{0.620} & 1.690 & \textbf{0.342} & \textbf{0.528} & \textbf{0.287} & 2.185 & \textbf{0.256} & \textbf{1.333} & \textbf{0.486} & \textbf{0.439} & \textbf{0.764} \\ 
\bottomrule
\end{tabular}%
}
\label{Tab:depth estimation}
\end{table*}

\section{Limitations and Areas for Improvement}
Not all experiments across the datasets yielded improved predictive performance (e.g., higher Dice scores) for SL. We attribute this to several factors. First, spatial lifting in this study involves expanding the original input to a higher-dimensional representation with a fixed additional dimension (e.g., depth = 16). However, for different tasks and datasets, this value should ideally be adjustable. Neural architecture search methods could potentially be employed to determine the optimal dimensional setting, but due to space and time constraints, this aspect was not explored.

Additionally, in this work, SL utilizes an off-the-shelf higher-dimensional model (e.g., 3D-UNet). This presents a potential area of improvement, as a more specialized higher-dimensional model could be developed specifically for SL-type inputs and outputs. Moreover, given the potential redundancy in processing the lifted inputs with the current high-dimensional model, further reductions in GMACs (or GFLOPs) may be achievable through architecture optimization tailored to SL.

\section{Conclusion}
In this work, we developed Spatial Lifting (SL), a novel methodology for dense prediction. SL operates by lifting low-dimensional inputs, such as 2D images, into a higher-dimensional space and processing them with deep networks designed for that dimension, like a 3D U-Net. Our extensive experiments on a wide array of semantic segmentation and depth estimation tasks demonstrated that this approach, contrary to what one might expect from increasing dimensionality, leads to a remarkable reduction in model parameters and computational costs. Despite this enhanced efficiency, SL-based models consistently achieved performance comparable or superior to traditional, more complex 2D architectures. The effectiveness of SL is supported by our theoretical analysis, which suggests that the method introduces a beneficial inductive bias through implicit spatial regularization, potentially leading to improved generalization. In addition, a key advantage of the SL framework is its intrinsic ability to produce structured outputs along the lifted dimension. We showed how this structure can be harnessed to provide a reliable, near-zero-cost assessment of prediction quality at test time by measuring the consistency across output slices—a critical feature for deploying models in real-world applications.

Future work could explore adaptive lifting strategies and the design of custom higher-dimensional networks specifically tailored for the SL paradigm, which may lead to greater efficiency and performance gains. Overall, Spatial Lifting presents a new vision modeling paradigm, shifting the focus from architectural complexity in the native input spatial dimension to the strategic use of higher-dimensional spatial representations. This work suggests a promising avenue toward developing more efficient, accurate, and self-aware deep learning models for dense prediction problems.

\bibliographystyle{IEEEtran}
\bibliography{reference}

\end{document}